 \providecommand\@dotsep{5}
 \def\listtodoname{List of Todos}
 \def\listoftodos{\@starttoc{tdo}\listtodoname}
\theoremstyle{definition}
\theoremstyle{remark}
\numberwithin{equation}{section}
\newtheorem{proposition}{Proposition}
\newtheorem{theorem}{Theorem}
\newtheorem{lemma}{Lemma}
\newtheorem{definition}{Definition}
\definecolor{lime}{HTML}{A6CE39}
\DeclareRobustCommand{\orcidicon}{%
	\begin{tikzpicture}
		\draw[lime, fill=lime] (0,0) 
		circle [radius=0.16] 
		node[white] {{\fontfamily{qag}\selectfont \tiny ID}};    \draw[white, fill=white] (-0.0625,0.095) 
		circle [radius=0.007];    \end{tikzpicture}
	\hspace{-2mm}}
\xdef\csname orcid\x\endcsname{\noexpand\href{https://orcid.org/\csname orcidauthor\x\endcsname}{\noexpand\orcidicon}}
\newtheorem{definition}{Definition}
\newtheorem{theorem}{Theorem}
\newtheorem{lemma}{Lemma}
\def\BibTeX{{\rm B\kern-.05em{\sc i\kern-.025em b}\kern-.08em
		T\kern-.1667em\lower.7ex\hbox{E}\kern-.125emX}}
\begin{document}
	
	%
	%=================================================================
	% Preamble which will need to be changed for submission
	%
	%\title{Harnessing Uncertainty for Protection: Quantum Differential Privacy with Quantum Noise
		%}
	\title{Black-Box Auditing of Quantum Model: Lifted Differential Privacy with Quantum Canaries
	}
	
	\author{Baobao Song\orcidA{}, Shiva Raj Pokhrel\orcidB{},~\IEEEmembership{Senior Member,~IEEE}, Athanasios V. Vasilakos\orcidC{},~\IEEEmembership{Senior Member,~IEEE}, Tianqing Zhu\orcidD{},~\IEEEmembership{Member,~IEEE}, Gang Li\orcidE{},~\IEEEmembership{Senior Member,~IEEE}
		\thanks{Received xxx (\textit{Corresponding author: Gang Li}).}
		\thanks{Baobao Song is with the Faculty of Engineering and IT, University of Technology Sydney, Sydney, Ultimo, NSW 2007, Australia~(email: baobao.song@student.uts.edu.au).}%
		\thanks{Shiva Raj Pokhrel and Gang Li  are with 
			the School of IT, Deakin University, Geelong, 	VIC 3125, Australia~(email: shiva.pokhrel@deakin.edu.au;gang.li@deakin.edu.au).}
		\thanks{Athanasios V. Vasilakos is with Center for AI Research (CAIR), University of Agder (UiA), Grimstad, Norway~(th.vasilakos@gmail.com).}
		%	 the Department of ICT, Center for AI Research(CAIR), University of Agder, Grimstad, Norway~(email: thanos.vasilakos@uia.no).}%
	\thanks{Tianqing Zhu is with the Faculty of Data Science, City University of Macau, Macau, China~(email: Tqzhu@cityu.edu.mo).}}%

%
%\author{\IEEEauthorblockN{Baobao Song}
	%\IEEEauthorblockA{\textit{Faculty of Engineering and IT} \\
		%\textit{University of Technology Sydney, Melbourne, Australia}\\
		%baobao.song@student.uts.edu.au}
	%\and
	%\IEEEauthorblockN{Shiva Raj Pokhrel}
	%\IEEEauthorblockA{\textit{School of Information Technology} \\
		%	\textit{Deakin University, Geelong, Australia}\\
		%	shiva.pokhrel@deakin.edu.au}
	%\and
	%\IEEEauthorblockN{Athanasios V. Vasilakos}
	%\IEEEauthorblockA{\textit{Department of ICT} \\
		%	\textit{University of Agder, Grimstad, Norway}\\
		%	thanos.vasilakos@uia.no}
	%\and
	%\IEEEauthorblockN{Tianqing Zhu}
	%\IEEEauthorblockA{\textit{Faculty of Data Science} \\
		%	\textit{City University of Macau, Macau, China}\\
		%	Tqzhu@cityu.edu.mo}
	%\and
	%\IEEEauthorblockN{Gang Li}
	%\IEEEauthorblockA{\textit{School of Information Technology} \\
		%\textit{Deakin University, Geelong, Australia}\\
		%gang.li@deakin.edu.au}
	%}

\maketitle

\begin{abstract}
% Quantum machine learning (QML) promises major computational advantages, 
% yet models trained on sensitive data may inadvertently memorize individuals, 
% creating privacy risks. 
% \emph{Quantum differential privacy} (QDP) mechanisms provide worst-case theoretical guarantees, 
% but they lack empirical auditing tools to verify privacy leakage in deployed QML models.
% In this paper, 
% we present the first black-box privacy auditing framework 
% for QML models based on \emph{Lifted Quantum Differential Privacy} (Lifted QDP). 
% The framework leverages quantum canaries, 
% carefully offset-encoded quantum states, 
% to detect memorization and empirically estimate the privacy leakage incurred during training.
% We not only link the canary offset to a trace distance bound theoretically, 
% but also empirically derive a rigorous lower bound on the privacy budget consumed during training, 
% thereby bridging the gap between theoretical guarantees and practical privacy auditing in quantum.
% Extensive evaluations across simulated and real quantum hardware 
% demonstrate the practicality and effectiveness of our framework
% in estimating real-world privacy loss in QML models.
Quantum machine learning (QML) promises significant computational advantages, yet models trained on sensitive data risk memorizing individual records, creating serious privacy vulnerabilities. While \emph{quantum differential privacy} (QDP) mechanisms provide theoretical worst-case guarantees, they critically lack empirical verification tools for deployed models. We introduce the first black-box privacy auditing framework for QML based on \emph{Lifted Quantum Differential Privacy}, leveraging quantum canaries—strategically offset-encoded quantum states—to detect memorization and precisely quantify privacy leakage during training. Our framework establishes a rigorous mathematical connection between canary offset and trace distance bounds, deriving empirical lower bounds on privacy budget consumption that bridge the critical gap between theoretical guarantees and practical privacy verification. Comprehensive evaluations across both simulated and physical quantum hardware demonstrate our framework's effectiveness in measuring actual privacy loss in QML models, enabling robust privacy verification in QML systems.
\end{abstract}

\begin{IEEEkeywords}
Quantum machine learning, Quantum differential privacy, Privacy auditing
\end{IEEEkeywords}

%\begin{IEEEkeywords}
%component, formatting, style, styling, insert
%\end{IEEEkeywords}

%=================================================================

%=================================================================
\section{Introduction}\label{sec-intro}

Quantum computing leverages superposition and entanglement 
to achieve computational advantages over classical systems~\cite{nielsen2001quantum}. 
Quantum machine learning (QML) exploits these quantum phenomena 
to potentially accelerate optimization and enable processing 
of classically intractable problems~\cite{khurana2024quantum}. 
However, the enhanced parallel processing capabilities of 
QML algorithms~\cite{lopez2024quantum} amplify privacy risks beyond those of classical counterparts. 
The inherent opacity of quantum systems further complicates privacy analysis 
and leakage detection~\cite{guan2023detecting}. 
Classical privacy-preserving techniques fail in quantum settings. 
Differential privacy (DP) requires repeatable data access, 
but quantum measurement irreversibly collapses quantum states, 
precluding direct application of classical DP. Additionally, 
quantum adversaries compromise classical cryptographic protections~\cite{tom2023quantum}.

Quantum Differential Privacy (QDP) addresses these limitations 
by ensuring quantum outputs remain $\epsilon$-indistinguishable 
for neighboring quantum states differing by small quantum distance~\cite{zhou2017differential}. 
Formally, a quantum mechanism $\mathcal{M}$ satisfies $\epsilon$-QDP 
if for neighboring states $\rho, \sigma$ with $\|\rho - \sigma\|_1 \leq d$:
\begin{equation}
\mathcal{D}(\mathcal{M}(\rho) \| \mathcal{M}(\sigma)) \leq \epsilon
%D_{\infty}\!\big(\mathcal{M}(\rho)\,\big\|\,\mathcal{M}(\sigma)\big) \le \epsilon
\end{equation}
where $\mathcal{D}$ denotes quantum max-relative entropy 
and $\epsilon$ is the  privacy loss parameter .
%where $D_{\infty}$ denotes max divergence (or max–relative entropy).
QDP exploits intrinsic quantum noise (decoherence, measurement uncertainty) 
as a native privacy resource, 
eliminating external noise 
injection~\cite{hirche2023quantum}. 
Existing QDP research focuses on mechanism design~\cite{hirche2023quantum, bai2024quantum, zhou2017differential}, 
while privacy auditing for deployed QML models remains unaddressed. 
The sole auditing technique~\cite{guan2023detecting} requires white-box access 
and provides loose worst-case bounds on $\epsilon$, limiting practical applicability.

\subsection{Motivation}
Current QML privacy auditing lacks black-box techniques 
suitable for realistic deployments. We require an auditing framework satisfying:
\begin{itemize}
\item \textit{Black-box operation}: No access to internal circuits, gradients, or training data
\item \textit{Statistical estimation}: Quantitative privacy loss bounds rather than binary classification
\item \textit{Computational efficiency}: Minimal overhead for regulatory compliance
\end{itemize}

We propose the first black-box auditing framework for QML models 
based on Lifted QDP~\cite{10993414}. 
Our approach introduces \textbf{quantum canaries}---offset-encoded quantum states 
%$|\psi_{\text{offset}}\rangle = e^{i\theta}|\psi\rangle$ 
$|\psi_{\text{offset}}\rangle =e^{-i\frac{\theta}{2}\,\sigma_a}|\psi\rangle$ 
designed to detect memorization through controlled perturbations with magnitude $\theta$, 
where $\sigma_a\in\{X,Y,Z\}$ denotes the Pauli matrix along axis $a\in\{x,y,z\}$. 
The trace distance between original and offset states satisfies:
\begin{equation}
\|\rho - \rho_{\text{offset}}\|_1 = 2|\sin(\theta/2)|
\end{equation}
for pure states. By analyzing the responses of the model to quantum canaries, 
we estimate the lower bounds on the effective privacy budget, 
%$\epsilon_{\text{eff}}$, 
providing interpretable privacy leakage measures under black-box constraints.

\subsection{Contributions}
Our key contributions are as follows.
\begin{itemize}
    \item First black-box auditing framework for QML models 
    providing statistically grounded lower bounds on privacy budget via Lifted QDP;
    \item Introduction of quantum canaries with theoretical guarantees
     relating offset magnitude $\theta$ to trace distance; and
    \item Experimental validation across datasets 
    and quantum circuits, including simulations and real quantum hardware evaluation
\end{itemize}

Our paper is organized as follows.
\Cref{sec-preliminaries} introduces the necessary background and reviews related work.
\Cref{sec-methods} presents the proposed Lifted QDP for Auditing QML framework, 
including the construction of quantum canaries 
and the design of the auditing algorithm.
\Cref{sec-qmldp} demonstrates the effectiveness and efficiency of the framework
across different datasets
and analyzes the impact of various parameter settings.
Finally, conclusions and future directions are provided in \Cref{sec-conclusions}.

% !TeX spellcheck = <none>
%=================================================================
\section{Preliminaries and Related work}\label{sec-preliminaries}
This section provides the background of QML, QDP, and privacy auditing.
First, it introduces the foundational components of quantum algorithms, 
data encoding methods used to map classical data into quantum states, 
and representative quantum machine learning (QML) model structures.
\Cref{sec:qdp} presents the concept of QDP and Lifted QDP, and their typical mechanisms for realization. 
\Cref{relatedwork} reviews prior work on model auditing and privacy analysis,
especially for quantum setting.

% \subsection{Quantum Machine Learning}\label{sec:qml}
% \subsubsection{Quantum Algorithm}
Quantum algorithm involves three fundamental stages: 
initializing an \emph{input quantum state}, 
evolving the state through unitary operations implemented via \emph{quantum circuits}, 
and performing \emph{quantum measurements} to obtain classical outputs. 
In the following, we outline each of stage in turn.

\textbf{Input Quantum State}
The input to a quantum algorithm is typically qubits, 
the fundamental units of quantum information.
A qubit $|\psi\rangle$ can exist in a \textit{superposition} of
the computational basis states $|0 \rangle$ and $|1 \rangle$,i.e.,
\begin{equation}\label{equ:1}
	|\psi\rangle = \alpha |0\rangle + \beta |1\rangle, \quad \text{where } |\alpha|^2 + |\beta|^2 = 1.
\end{equation}
Any pure single-qubit state can be represented 
as a point on the surface of the \textit{Bloch sphere}, 
a unit sphere in three-dimensional space. 
Extending to $n$ qubits,
we take tensor products of the individual Hilbert spaces,
therefore forming a $2^n$-dimensional Hilbert space. 

\textbf{Quantum Circuits}
A quantum algorithm is typically realized as a quantum circuit,
which applies a sequence of quantum gates to an initial state 
to carry out the desired computation.
A quantum gate is a unitary operator $U$ acting
on one or more qubits.
These operations can change the states of qubits, 
including flipping, rotating, creating superposition.
Common single-qubit gates include the Pauli gates
$X$, $Y$, and $Z$, the Hadamard gate $H$,
and the rotation gates $R_x(\theta)$, $\$R_y(\theta)$ and $R_z(\theta)$,
which perform rotations about the respective axes of the Bloch sphere. 
Multi-qubit gates such as the CNOT gate and controlled-$Z$ gate, 
operate on two or more qubits and are indispensable for generating entanglement.
A quantum circuit of depth $d$ is an ordered sequence of such gates,
\begin{equation}\label{equ:2}
\mathcal{E}=U_{d}\,\cdots\,U_{2}\,U_{1}.
\end{equation}
A quantum circuit transforms an initial input state 
into a final quantum state through $n$ gates. 
In quantum machine learning, 
parameterized quantum circuits are commonly used, 
where certain gates include tunable parameters optimized during training. 

\textbf{Quantum Measurement}
Quantum measurement is the final step in a quantum algorithm, 
converting a quantum state into classical information 
through probabilistic projection onto a measurement basis.
This process is modelled by a positive-operator-valued measure (POVM)~\cite{oszmaniec2017simulating},
represented by a set of measurement operators $\{M_i\}_{i \in \mathcal{O}}$ 
acting on the Hilbert space,
and satisfying the completeness relation $\sum_i M_i = I$.
Due to finite sampling (i.e., a limited number of repeated circuit executions), 
the observed outcomes are subject to shot noise,
a source of statistical uncertainty.
\subsubsection{QML Data Encoding}
Quantum data encoding, 
also known as quantum feature mapping, 
refers to the process of embedding classical data 
into quantum states that can be processed by quantum circuits. 
Different encoding strategies induces a specific geometry in Hilbert space 
and affecting the expressive power of the resulting quantum model.

A common encoding method is \texttt{angle encoding}, 
where each feature is mapped to a rotation gate.
These gates typically perform rotations around the $x$-, 
$y$-, or $z$-axis of the Bloch sphere, 
using gates such as $R_x(x_i)$, $R_y(x_i)$, or $R_z(x_i)$.
For an input vector $x = (x_1, x_2, \dots, x_d)$, 
angle encoding $R_y$ applies one rotation per feature to an initially prepared $|0\rangle^{\otimes d}$state:
\begin{equation}\label{equ:3}
|\vec{x}\rangle =\bigotimes_{i=1}^{n} R_y(x_i) |0\rangle.
\end{equation}

This encoding maps each component $x_i$ to a quantum rotation
and produces a feature-dependent state geometry in Hilbert space.
One of the key advantages of angle encoding is its hardware efficiency. 
It yields encoding layer uses only single-qubit rotations
and requires only a linear number of qubits relative to the feature size. 

\subsubsection{QML Model Structures}

Quantum machine learning (QML) models can generally be classified into three categories:  
quantum versions of classical algorithms 
(replicate classical models using quantum circuits),
quantum-inspired machine learning 
(draw on quantum principles to improve classical algorithms)
and hybrid classical-quantum models (quantum circuits are integrated into classical machine learning, shown in \Cref{fig:qml})~\cite{zeguendry2023quantum}.
Here, 
we focus on the most widely used category, hybrid QML models, 
due to their flexibility, trainability, 
and compatibility with nowadays quantum hardware.
In the paper,
all QML models in below refer specifically to this hybrid classical–quantum architecture.

\begin{figure}[h]
	\centering
\includegraphics[width=0.95\linewidth]{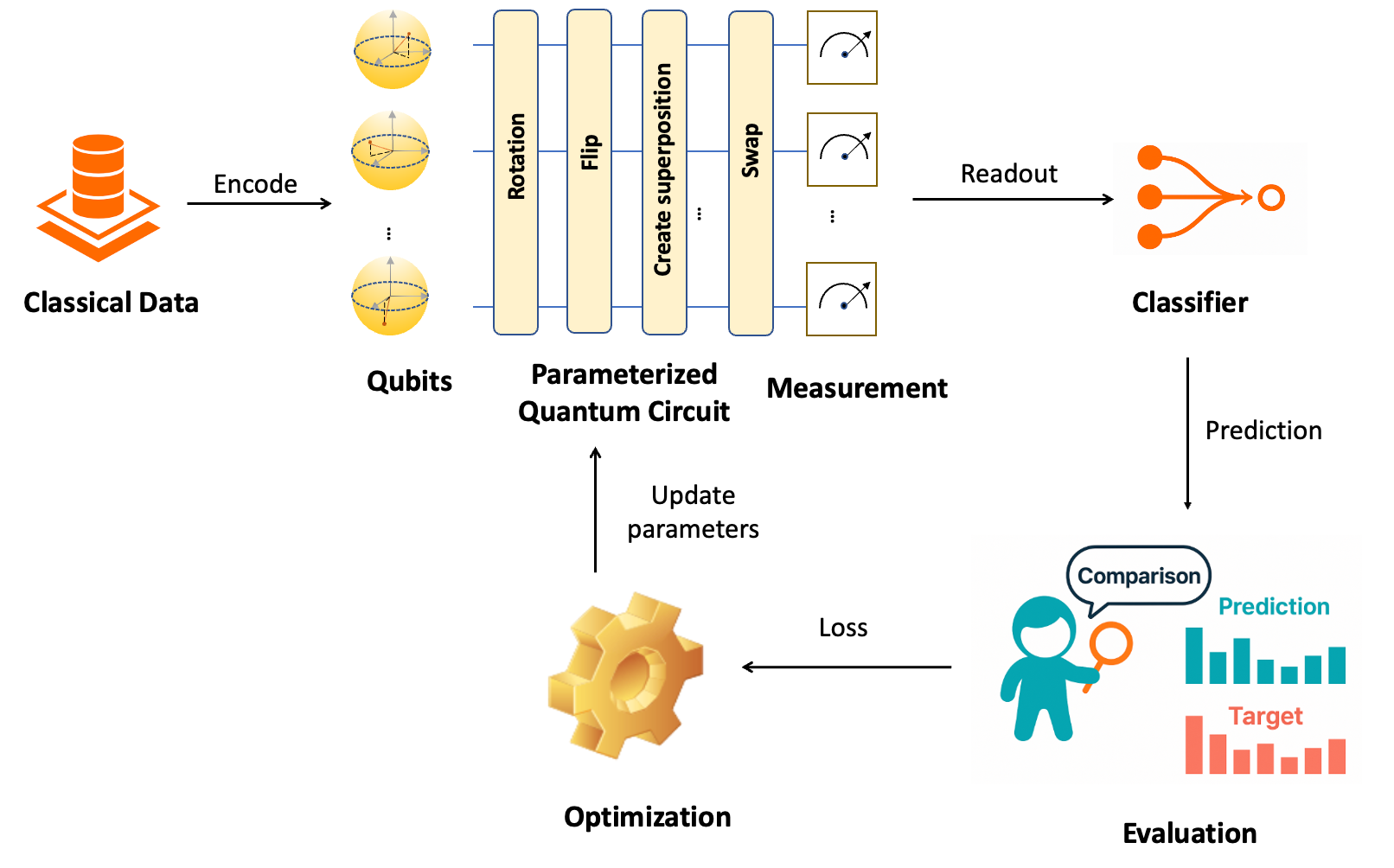}
	\caption{The framework of hybrid classical-quantum models}
	\label{fig:qml}
\end{figure}

QML models combine quantum circuit, 
typically parameterized quantum circuits (PQCs),
with classical components such as optimizers. 
As shown in \Cref{fig:qml},
classical data is first encoded into quantum states, 
which are then processed through PQCs to perform transformations.
The resulting quantum states are measured 
and read out into classical information, 
which is passed to a classical classifier to generate predictions. 
These predictions are compared with ground truth labels in an evaluation step to compute a loss, 
which is subsequently used by a classical optimization algorithm to update the parameters of the quantum circuit. 
This iterative process defines a class of models often referred to as \emph{Hybrid Quantum Neural Networks}~\cite{adhikary2020supervised}  (QNNs), 
where the PQC plays a role analogous to a neural network layer in classical deep learning.
QNNs leverage both quantum feature encoding and variational transformations, 
trained via classical feedback, 
to model complex relationships in data.
This hybrid framework is particularly suitable for noisy intermediate-scale quantum (NISQ) devices, 
as it minimizes quantum resource requirements 
and leverages the strengths of classical computing.

\subsection{QDP and Lifted QDP}\label{sec:qdp}
\emph{Quantum Qifferential privacy} (QDP)~\cite{zhou2017differential} extends the protective principle 
of classical Differential Privacy (DP)~\cite{dwork2006differential} to quantum-computing settings.
It offers a formal guarantee that, 
for any pair of neighboring quantum input states, 
the output states produced by a quantum algorithm 
(e.g.  set of instructions that uses quantum bits to process information) 
are statistically indistinguishable under every possible measurement, 
ensuring that the addition or removal of a sensitive quantum state
does not significantly affect the statistics observable to an adversary.
Here, 
two quantum states are considered neighbors
if they are close to each other in terms of a chosen distance metric, 
such as the trace distance~\cite{hirche2023quantum},
which is the sum of the absolute values of the differences
between the eigenvalues of the two states.
\begin{definition} \label{def:td}	\textbf{Neighboring Quantum States}. 
Given two quantum states $\rho$ and $\sigma$,
they are considered neighboring 
if their trace distance is below a certain threshold $d$:
\begin{equation}\label{equ:9}
	\tau(\rho,\sigma)=\frac{1}{2} \operatorname{Tr}(|\rho-\sigma)|) =\frac{1}{2} \sum_i |\lambda_i(\rho) - \lambda_i(\sigma)|\leq d.
\end{equation}
where trace $\operatorname{Tr}(\cdot)$ denotes the matrix trace,
and $\lambda_i(\rho)$ and $\lambda_i(\sigma)$ 
denote the eigenvalues of $\rho$ and $\sigma$ respectively.
\end{definition}
Based on the above definition of neighboring quantum states,
the formal definition of QDP is given below.
\begin{definition} \label{def:qdp}	\textbf{Quantum Differential Privacy}. 
    Consider a quantum algorithm $\mathcal{A}=\{\mathcal{E}, \{M_i\}_{i \in \mathcal{O}}\}$ 
    where $\mathcal{E}$ is a quantum circuit and
	$\{M_i\}_{i \in \mathcal{O}}$ is a set of measurement operators forming a POVM with outcome set $\mathcal{O}$.
    The quantum algorithm $\mathcal{A}$ satisfies $(\epsilon, \delta)$-QDP 
	if for any pair of neighboring quantum states $\rho$ and $\sigma$, 
	and any measurement output $\mathcal{O}$, 
	it holds that:
	\begin{equation}\label{equ:10}
		\sum_{i \in \mathcal{O}} \operatorname{Tr}[M_i \mathcal{E}(\rho)] \leq e^{\epsilon} \sum_{i \in \mathcal{O}} \operatorname{Tr}[ M_i \mathcal{E}(\sigma)] + \delta
	\end{equation}
	where $\epsilon$ quantifies the privacy loss 
    and $\delta$ accounts for a small probability of failure in the privacy guarantee. 
    When $\delta = 0$, 
    the algorithm satisfies pure $\epsilon$-QDP, meaning that the privacy guarantee holds with absolute certainty.
\end{definition}

QDP can leverage the intrinsic noise 
from quantum channels or quantum measurements to achieve privacy protection~\cite{10993414}.
Typical channel noises include depolarizing noise~\cite{du2021quantum}, 
which drives a qubit towards the maximally mixed state, 
and generalised amplitude damping (GAD) noise~\cite{srikanth2008squeezed}, 
which causes qubits to lose their excitation 
and drift toward a thermal equilibrium state.
Meanwhile,
shot noise in the quantum measurement stage
can increase output uncertainty and enhancing privacy.
The QDP mechanism for depolarizing noise, 
GAD noise, and shot noise are described as follows:

\textbf{QDP Mechanism of Depolarizing Noise}~\cite{zhou2017differential}. \label{the:dep} 
Given the trace distance $d$ of two neighboring qubits
and the dimension of the Hilbert space $D$, 
the depolarizing noise channel provides $\epsilon$-QDP with
\begin{equation}\label{equ:12}
\epsilon_{dep} = \ln \left[ 1 + \frac{(1 - p)dD}{p} \right]
\end{equation}
where $p$ denotes the depolarizing probability.
% \end{theorem}

% \textbf{QDP Mechanism of GAD Noise}~\cite{zhou2017differential}. \label{the:gad} 
% Given the trace distance $d$ of two neighboring qubits
% and the dimension of the Hilbert space $D$,
% the GAD noise channel provides $\epsilon$-QDP with
% \begin{equation}\label{equ:16}
% \epsilon_{gad} = \ln \left[ 1 + \frac{2d\sqrt{1-\gamma}}{1-\sqrt{1-\gamma}} \right]
% 	\end{equation}
% where $\gamma$ is the the energy exchange possibility
% between the system and the environment.
% \end{theorem}

% \begin{theorem}
\textbf{QDP Mechanism of Measurement Noise}~\cite{li2023differential}. \label{the:mea} 
Assume the number of measurements is $N$
and the set of measurement projection operators is $\{M_i\}$ 
with maximum rank $r$ of $\{M_m\}$. 
For quantum states $\rho$ and $\sigma$ with $F(\rho, \sigma) \leq d$, 
the measurement process holds ($\epsilon$, $\delta$)-QDP with
\begin{equation}\label{equ:20}
\epsilon = 
\frac{Ndr}{\mu(1 - \mu)} 
\left[
\frac{(1 - 2\mu - Ndr)c^2}{2\mu(1 - \mu - Ndr)} 
+ c 
+ \frac{Ndr}{2}
\right]
\end{equation}
where $\mu = \min\{\mu_1, \mu_0\}$ 
($\mu_0 = \operatorname{Tr}(\rho M_m)$ and
$\mu_1 = \operatorname{Tr}(\sigma M_m)$), 
$\delta = \sqrt{2\pi} \sigma \, \mathrm{erfc} \left( \frac{c}{\sqrt{2} \sigma} \right)$,
and $\sigma = \sqrt{\frac{\mu (1 - \mu)}{N}}$.
% \end{theorem}

As QDP is defined with respect to one fixed pair of neighbouring states, 
every privacy audit must train a new model on that specific pair and run a separate hypothesis test. 
Repeating this process for different pairs quickly becomes computationally expensive.
Therefore, 
Lifted QDP~\cite{10993414} is proposed to replaces that fixed setup with a randomised neighbouring-state pairs 
and rejection events drawn from a joint distribution. 
Randomisation lets many correlated hypothesis tests be re-used on the same trained model, 
reducing sample complexity and computation, 
while rejection sets give higher detection power against privacy leaks.
The definition of Lifted QDP is: 

\begin{definition} \label{def:liqdp} \textbf{Lifted Quantum Differential Privacy}.  
Let $P$ represent a joint probability distribution over ($\rho$, $\sigma$, $R$), 
where ($\rho$, $\sigma$) is a pair of random neighboring quantum states and $R$ is a rejection set. 
A quantum algorithm $\mathcal{A}=\{\mathcal{E}, \{M_i\}_{i \in \mathcal{O}}\}$ satisfies $(\epsilon, \delta)$-Lifted QDP if, 
for any quantum measurement $M$, 
the following condition holds:
\begin{equation}\label{equ:10}
    \mathbb{P}_{\mathcal{A},\mathcal{P}}[\sum_{i \in \mathcal{O}} \operatorname{Tr}[M_i \mathcal{E}(\rho)] \in R] \leq e^{\epsilon} \cdot \mathbb{P}_{\mathcal{A},\mathcal{P}}[\sum_{i \in \mathcal{O}} \operatorname{Tr}[ M_i \mathcal{E}(\sigma)] \in R] + \delta.
\end{equation}
\end{definition}

Lifted QDP turns the single-pair, single-test guarantee of standard QDP into 
an guarantee over an entire distribution of canaries and tests.
Because the auditor can choose
a single trained model can be tested many times 
with different, but statistically correlated, 
quantum canary pairs ($\rho$, $\sigma$) and rejection sets $R$,
and the resulting collection of hypothesis tests can be combined to estimate the privacy leakage. 
Besides,
Lifted QDP can be viewed as a strict generalisation of QDP.
When the lifted joint distribution $\mathcal{P}$
is fixed independently of the algorithm $\mathcal{A}$, 
Lifted QDP is equivalent to QDP~\cite{10993414}.
% QDP inherits the post-processing property from classical DP. 
% That is, if a quantum algorithm (or mechanism) $\mathcal{A}$ satisfies $(\epsilon, \delta)$-QDP, 
% then for any channel/function $\mathcal{F}$, 
% the composed algorithm $\mathcal{F} \circ \mathcal{A}$ 
% also satisfies $(\epsilon, \delta)$-QDP. 

\subsection{Related Work on Auditing}\label{relatedwork}

In classical ML, 
privacy auditing is a critical technique 
to empirically assess whether a model conforms to its claimed DP guarantees.
A common method is the canary-based audit in black-box setting~\cite{namatevs2025privacy}, 
in which specially crafted examples (called canaries) 
are injected into the training dataset, 
and hypothesis testing is performed on the model’s output 
to determine if those canaries have been memorized.
Early audits required training hundreds of models 
because each hypothesis test compared two separately-trained networks 
that differed by a single canary~\cite{carlini2019secret}. 
Recent work has significantly reduced this cost 
by allowing multiple canaries to be tested 
in parallel within a single training run~\cite{pillutla2024unleashing,steinke2023privacy,jagielski2020auditing}.
Jagielski et al.~\cite{jagielski2020auditing} introduced a deterministic approach 
by directly injecting multiple canaries into the dataset 
and conducting multiple hypothesis tests to detect vulnerabilities in DP-SGD. 
While effective, this method has limitations due to its lack of randomness and diversity, 
which means it cannot fully address all potential privacy issues.
Therefore, 
Steinke et al.~\cite{steinke2023privacy} adopted a randomized approach, 
where canaries are sampled from a Poisson distribution. 
This method leverages randomness to ensure 
that canaries are included or excluded independently, 
which eliminates group privacy concerns, 
and allows for the reuse of models across multiple tests.
Pillutla et al. \cite{pillutla2024unleashing} introduced Lifted DP, 
which expands the DP definition to handle randomized datasets.
By incorporating multiple random canaries, 
their framework allows for several tests to be run simultaneously using one model. 
They demonstrated that this framework makes the auditing process significantly more sample-efficient, 
without compromising the privacy guarantees.
Recent advancements in canary-based auditing 
have also extended to large language models~\cite{panda2025privacy,meeus2025canary},
particularly in the context of privacy risks associated with synthetic data generation, 
where new canary designs have been introduced to enhance the effectiveness of privacy audits, 
improving both memorization detection and the influence on synthetic data outputs.

However, 
these classical auditing approaches cannot be directly extended to QML
due to fundamental differences between classical and quantum computations. 
Quantum algorithms operate on quantum states 
and leverage phenomena such as entanglement and superposition, 
making their behavior and the impact of privacy mechanisms 
distinct from classical counterparts. 
% Consequently, dedicated auditing frameworks for QDP need to be developed.
Up to now,
Guan et al. \cite{guan2023detecting} is the only one work designed forverifying QDP violations.
Unlike classical canary-based black-box auditing, 
their method employs a white-box approach 
that directly analyzes the internal quantum operations of algorithms.
They systematically examines quantum circuits 
by analyzing output state distributions 
through linear algebraic techniques, 
formulating QDP verification as a mathematical problem involving eigenvalues 
and eigenvectors of certain matrices derived from quantum operations.
The privacy leakage estimated by their method represents an upper bound on $\epsilon$, 
often approaching theoretical values; however, 
these estimates are typically excessively high in practical scenarios.
Furthermore, 
the white-box setting of their method 
limits its applicability 
in contemporary quantum computing environments, 
where internal algorithm details may be inaccessible or protected. 
Consequently, 
there is a need for a practical auditing method 
based on statistical hypothesis testing to more estimate actual privacy leakage in QML models.
\section{Lifted QDP for Auditing QML Framework}\label{sec-methods}

Based on \texttt{Lifted QDP},
we propose a novel framework to audit QML models effectively.
By constructing multiple quantum canaries, 
this framework enables the tracing of model behavior under subtle, information-carrying perturbations, 
thereby facilitating the assessment of the model’s privacy protection capabilities.
\Cref{sec-pro} formally defines the privacy auditing problem in QML settings, 
outlining the threat model and auditing objectives
The proposed framework is elaborated in \Cref{sec-framework}, 
which introduces its core components 
and the underlying auditing intuition. \Cref{sec-qcanaries} details the construction of quantum canaries 
via tailored encoding strategies. \Cref{sec-algorithm} present the \texttt{Lifted QDP-based Auditing Algorithm} for practical deployment.

\subsection{Problem Statement}
\label{sec-pro}

Let $\theta$ denote a QML model trained on a private quantum dataset $\mathcal{D}$ 
using a parameterized quantum circuit. 
The auditor does not have access to the internal structure of $\theta$, 
including its circuit design, Kraus operators, or observables, 
but is allowed to (i) submit data for training via a training API, 
and (ii) query the model's predictions for inputs through an inference interface.
The adversary (e.g., the model trainer) is honest-but-unreliable: 
privacy violations may occur due to misconfigured or insufficiently randomized training procedures, 
rather than malicious intent. 
Our goal is to estimate the extent to which the model $\theta$ has memorized specific training inputs, 
a phenomenon indicative of privacy leakage($\epsilon$), 
while minimizing the required sample size.
Here, the sample size refers to the number of independent model training runs needed to perform statistical auditing. 
Therefore, given the $n$ independent calls to a training oracle $\texttt{Train}$, 
a trace distance bound $d$ defining the closeness of neighboring quantum states, 
and a failure probability $\beta$,
the auditing model is designed as
\begin{equation}\label{equ:def}
\mathcal{F}\colon\bigl(\textbf{Train},\textbf{Infer},d,n,\beta\bigr)\;\longmapsto \hat{\epsilon}.
\end{equation}

\subsection{Auditing Idea and Framework}
\label{sec-framework}

Auditing QML models aims to detect whether 
such models memorize or inadvertently leak information about quantum data. 
Inspired by classical ML auditing method, 
we adapt the notion of canaries to the quantum setting 
by designing \textit{quantum canaries}, 
quantum-encoded inputs that carry carefully embedded sensitive information. 
These canaries are constructed to interact with the model in a subtle way, 
enabling us to probe its internal behavior 
and infer potential privacy breaches. 

The core idea behind auditing QML models is intuitive: 
if a model memorizes specific training inputs, 
it is likely to exhibit distinguishable behavior 
when exposed to those same inputs again. 
In our approach, some quantum canaries are inserted into the training data 
and then reintroduced during testing. 
By comparing the model's responses to these trained quantum canaries 
with those to untrained, unseen quantum canaries, 
we can infer whether the model has retained information 
specific to the original quantum canaries.
A central challenge of this strategy lies in 
constructing a rigorous and efficient framework 
that enables quantifiable detection of 
such leakage under quantum constraints.

Therefore, we build our auditing framework upon the concept of Lifted QDP. 
While classical QDP assesses privacy by bounding the distinguishability
between the outputs of a quantum mechanism on neighboring quantum states, 
it inherently considers a fixed auditing setup, 
typically, a single pair of neighboring states and a specific test condition.
This formulation hinders flexibility in the design of auditing mechanisms 
and leads to high sample complexity, 
as each privacy test typically 
requires a fresh model training run and a dedicated hypothesis test.
Lifted QDP overcomes these limitations by
considering a randomized ensemble of neighboring quantum state pairs 
and rejection sets drawn from a joint distribution.
This expanded formulation enables the use of randomized quantum canaries (from the certain distribution)
and supports multiple correlated hypothesis tests per model instance. 
By leveraging the test statistics  induced by quantum canaries, 
Lifted QDP allows statistical reuse of each trained model, 
significantly reducing the number of required training runs. 
Moreover, the use of per quantum canary rejection sets (e.g., a loss-threshold rule learned on separate data) 
and the adaptive confidence intervals that exploit empirical correlations, 
boosts auditing power while preserving rigorous finite-sample guarantees. 
This probabilistic lifting not only retains the core privacy guarantees of standard QDP 
but also provides a more expressive and sample-efficient foundation 
for detecting subtle privacy violations in QML models. 
Building upon this definition, 
we present a complete auditing framework for QML models,
which is shown as \Cref{fig:auditing}.

\begin{figure}[h]
	\centering
\includegraphics[width=0.95\linewidth]{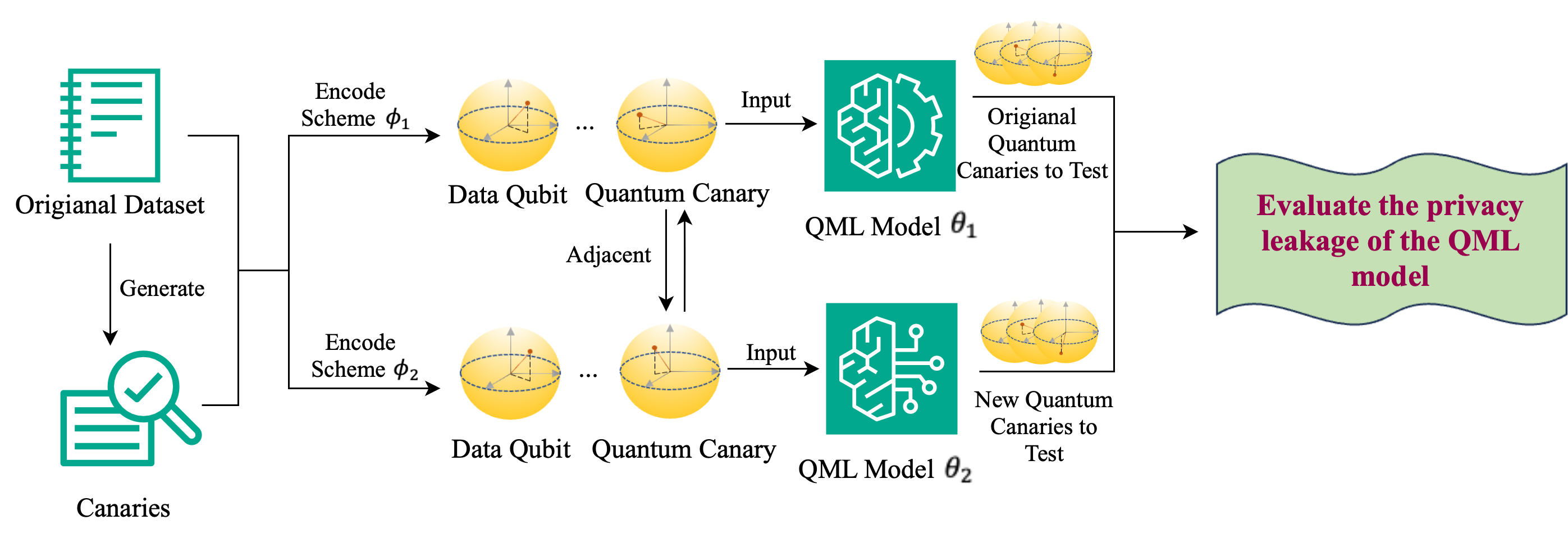}
	\caption{The auditing framework for QML models}
	\label{fig:auditing}
\end{figure}

In this framework, 
we begin with an original dataset and generate two equally sized sets of 
canaries-specially constructed classical data points 
that are intended to encode sensitive information.
These canaries are specially designed to match the statistical characteristics of the original data distribution.
One set of canaries is transformed into quantum canaries by applying two quantum encoding schemes  $\phi_1$ and $\phi_2$,
ensuring that the resulting quantum states 
satisfy the adjacency condition required by Lifted QDP,
namely, a trace distance below a threshold  $d$.
Both the original dataset and the first set of canaries 
are independently encoded into quantum states.
These states, 
comprising data qubits and quantum canaries, 
are then used as input to train two QML models, 
denoted $\theta_1$ and $\theta_2$.
Only the first set of canaries is included in the training phase, 
while the second set is held out exclusively for evaluation.
This setup enables the simulation of randomized ensembles of neighboring quantum states 
and supports parallel hypothesis testing under Lifted QDP.
% Importantly, 
% the framework supports statistical reuse of the trained models: 
% a single model instance can be evaluated 
% against a distribution of quantum canaries, 
% thereby enabling multiple hypothesis tests without retraining.

During evaluation, 
both models are tested exclusively on quantum canaries 
encoded using the same quantum encoding scheme $\phi_1$ (or $\phi_2$). 
Importantly, model $\theta_1$ is tested on the same set of quantum canaries it was trained on,
while model $\theta_2$ is tested on a disjoint set it has never seen during training.
We then measure the loss associated with each quantum canary. 
The loss function is designed to quantify the model’s sensitivity to variations in these inputs, 
serving as a proxy for how much information the model retains 
about the first set of encoded quantum canaries.
If the loss for QML model $\theta_1$ (on seen quantum canaries) 
is significantly lower than the loss for QML model $\theta_2$
(on unseen quantum canaries), 
this indicates that $\theta_1$ has memorized more information about the quantum states, 
reflecting the presence of quantum memorization 
and greater privacy leakage.
By comparing the losses associated with the quantum canaries 
across the two models, 
we can estimate the effective privacy parameter $\epsilon$ of the QML model,
hereby quantifying the degree of leakage 
and assessing the robustness of the QML model’s privacy protections 
in a statistically principled and sample-efficient manner.

\subsection{Construction of Quantum Canaries}\label{sec-qcanaries}

Traditional privacy auditing techniques in classical ML 
rely on canaries—carefully designed data records inserted into the training dataset 
to probe whether a model memorizes and leaks specific inputs. 
These canaries are typically raw data samples drawn from or engineered in the input space. 
Such canaries are inherently classical in nature and in quantum setting (information is represented and manipulated as quantum states),
we have to understand privacy leakage in terms of the distinguishability between quantum states.
This fundamental shift necessitates the construction of Quantum Canaries, 
which are quantum-encoded representations of data 
designed to test the privacy behavior of quantum machine learning models.

Since most QML models encode data one record at a time, 
we must ensure that every record in the dataset, 
after encoding, satisfies the definition of adjacency in QDP. 
To achieve this, 
we adopt a slightly different encoding method for neighboring records.
Specifically, 
for the original training dataset 
$D$, 
both models adopt the same angle encoding scheme for all records, 
namely $\phi(x) = \pi x$, 
where $x$ denotes the inputs. 
This ensures consistency in the quantum representations 
of non-canary data across the two models.
The distinction lies in the treatment of the inserted canaries,
which is shown in \Cref{fig:encode}. 
In model $\theta_1$, 
the canaries are encoded identically to the original data 
using the same angle function $\phi(c) = \pi c$, 
where $c$ represents the generated canaries. 
In contrast, 
model $\theta_2$ encodes the canaries using a slightly perturbed angle encoding of the form:
$\phi'(c) = \pi*c + \alpha$
where $\alpha$ is a small offset 
sampled from a Gaussian distribution $\mathcal{N}(0, \sigma^2)$,
introduced to induce a bounded change 
in the resulting quantum state, 
while maintaining adjacency under QDP.

\begin{figure}[h]
	\centering
\includegraphics[width=0.7\linewidth]{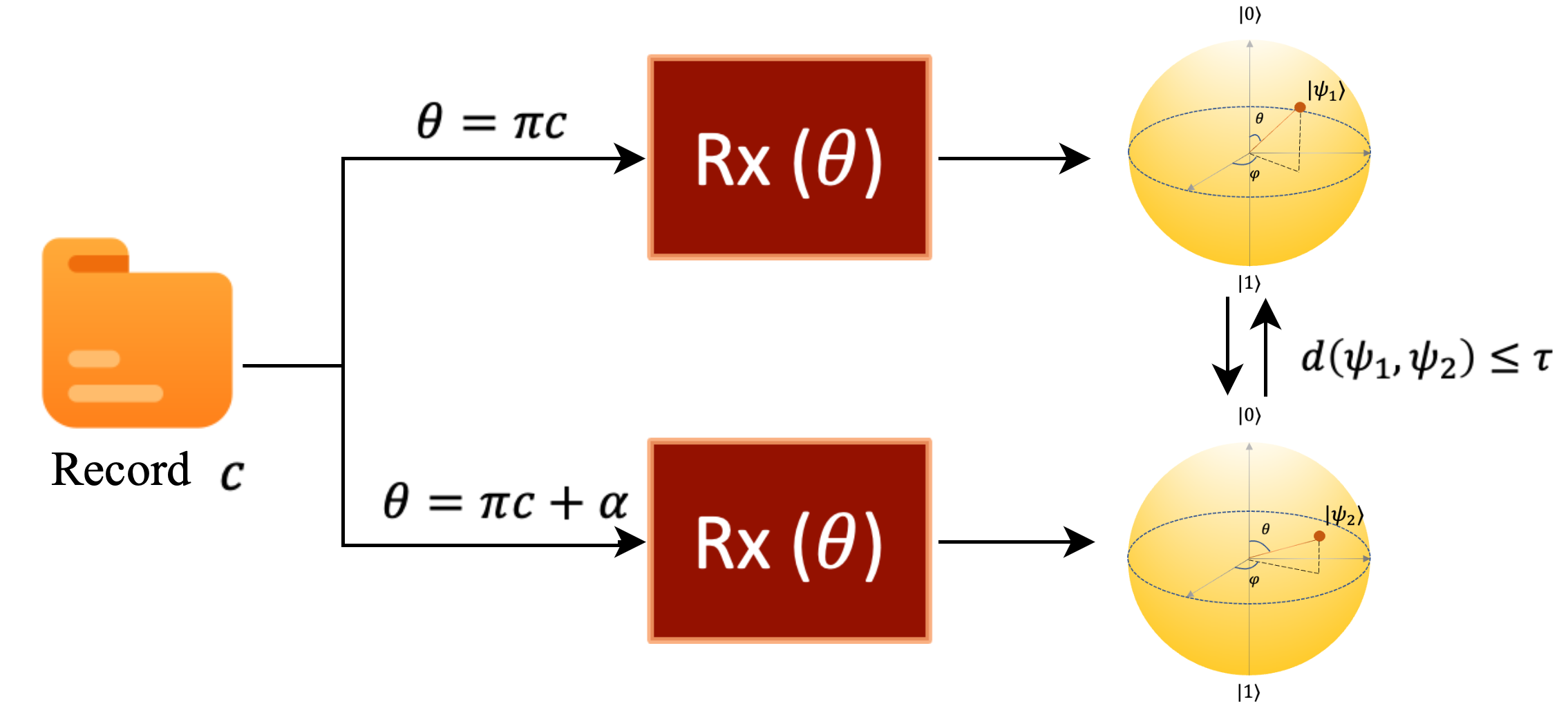}
	\caption{Encode method for constructing quantum canaries}
	\label{fig:encode}
\end{figure}

We formally prove the relationship between the offset angle $\alpha$
and the trace distance upper bound $d$ in the QDP.
The relationship is given as follows:

\begin{theorem}\label{the:the1}
Consider a classical data vector $x \in \mathbb{R}^m$ consisting of $m$ features. Each feature $c_j$ is encoded into a single qubit using either the $R_x$ or $R_y$ rotation gate.  
We define two encoding schemes $\phi(c_j) = \pi c_j$ and $\phi'(c_j) = \pi c_j + \alpha$, 
where $\alpha$ is a offset sampled from a Gaussian distribution $\mathcal{N}(0, \sigma^2)$.
To ensure that the quantum states produced by the two encodings remain sufficiently close (trace distance less than $d$), 
the $\sigma$ is bounded by:
\begin{equation}\label{equ:ec2}
\sigma \leq \frac{2 \arcsin(d)}{2.576}
\end{equation}
\end{theorem}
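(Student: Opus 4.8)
The plan is to reduce the multi-qubit statement to a single-qubit computation, because the encoding is a tensor product over features and trace distance is well-behaved under tensor products. First I would fix one feature $c_j$ and consider the two single-qubit pure states $|\psi\rangle = R(\pi c_j)|0\rangle$ and $|\psi'\rangle = R(\pi c_j + \alpha)|0\rangle$, where $R$ is either $R_x$ or $R_y$. Since both rotations are about the same Bloch-sphere axis, the two Bloch vectors differ by an angle $\alpha$ (the rotation angle enters the Bloch vector at half-angle, but the \emph{relative} angle between the two vectors on their common great circle is exactly $\alpha$, up to the usual factor-of-two bookkeeping that I would track carefully). For two pure qubit states whose Bloch vectors subtend an angle $\vartheta$, the trace distance is $\sin(\vartheta/2)$; combined with the paper's stated identity $\|\rho-\rho_{\text{offset}}\|_1 = 2|\sin(\theta/2)|$ for pure states, this gives a single-qubit trace distance of $|\sin(\alpha/2)|$ in the normalization of \Cref{def:td}.

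Next I would handle the tensor product. With $m$ features, the two global states are $\rho = \bigotimes_j \rho_j$ and $\rho' = \bigotimes_j \rho_j'$. The cleanest route is to bound the global trace distance by the sum of the single-qubit trace distances (subadditivity of trace distance under tensor products, a standard fact I would invoke without proof), or, since only the canary encodings differ and one can arrange a worst-case single coordinate, to observe that it suffices to control each coordinate at level $d/m$ — or, if the intended reading is per-qubit adjacency as the surrounding text ("every record in the dataset, after encoding, satisfies the definition of adjacency") suggests, to control a single coordinate at level $d$. I would adopt the per-feature reading consistent with the theorem's phrasing "the quantum states produced by the two encodings," which yields the requirement $|\sin(\alpha/2)| \le d$, i.e. $|\alpha| \le 2\arcsin(d)$.

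The final step is the Gaussian tail bound. Since $\alpha \sim \mathcal{N}(0,\sigma^2)$, I want $|\alpha| \le 2\arcsin(d)$ to hold with high probability; requiring this at the standard two-sided $99\%$ confidence level means $2\arcsin(d) \ge 2.576\,\sigma$, since $2.576$ is the $0.995$ quantile of the standard normal. Rearranging gives exactly $\sigma \le \dfrac{2\arcsin(d)}{2.576}$, as claimed.

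The main obstacle is not any single computation but pinning down the factor-of-two conventions and the exact meaning of "trace distance less than $d$" in the multi-feature setting: whether the bound is meant per qubit or for the joint state, and whether one uses the small-angle approximation $\arcsin(d)\approx d$ or keeps $\arcsin$ exact. I would therefore state the single-qubit trace-distance identity precisely first, then make the per-feature adjacency convention explicit, so that the chain $\|\cdot\|_1 \le d \iff |\sin(\alpha/2)| \le d \iff |\alpha|\le 2\arcsin(d)$, followed by the $N(0,\sigma^2)$ two-sided $99\%$ tail bound, delivers the stated inequality cleanly.
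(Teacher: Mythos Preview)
Your proposal is correct and follows essentially the same route as the paper: compute the single-qubit trace distance between $R(\pi c_j)|0\rangle$ and $R(\pi c_j+\alpha)|0\rangle$ as $|\sin(\alpha/2)|$, invert to get $|\alpha|\le 2\arcsin(d)$, and apply the two-sided Gaussian tail at the $99\%$ level to obtain $\sigma \le 2\arcsin(d)/2.576$. The paper's own proof does exactly this, silently adopting the per-feature reading you identified (it never addresses the $m$-qubit tensor product at all), so your extra discussion of the joint-vs.-per-qubit convention is more careful than the original.
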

% Let $\rho$ and $\sigma$ denote the resulting quantum states 
% under the first and second encoding schemes, respectively. 
% The trace distance between $\rho$ and $\sigma$ is bounded by:
% \begin{equation}\label{equ:ec1}
% d(\rho, \sigma) \leq \frac{\alpha}{2}\leq \tau,
% \end{equation}
% \end{theorem}

\begin{proof}
Given two pure states $\ket{\psi_1}$ and $\ket{\psi_2}$, 
which are obtained by encoding classical data using the $R_x$ or $R_y$ rotation gates, 
the trace distance is defined as:
\begin{equation}\label{equ:ec2}
\tau(\ket{\psi_1}\bra{\psi_1}, \ket{\psi_2}\bra{\psi_2}) = \sqrt{1 - |\langle \psi_1 | \psi_2 \rangle|^2}
\end{equation}
Thus, 
the trace distance for a single qubit is:
\begin{equation}\label{equ:ec3}
\tau= \sqrt{ \frac{1 - \cos^2(\phi - \phi')}{2} }=\left| \sin\left( \frac{\phi' - \phi}{2} \right) \right| = \left| \sin\left( \frac{\alpha}{2} \right) \right|< d 
\end{equation}
From this we derive the following:
\begin{equation}\label{equ:ec4}
 \quad |\alpha| < 2 \arcsin(d)
\end{equation}
Now assume that the offset $\alpha$ is sampled from a Gaussian distribution, 
i.e., $\alpha \sim \mathcal{N}(0, \sigma^2)$:
\begin{equation}\label{equ:ec5}
P\left( |\alpha| < 2 \arcsin(d) \right) = 1 - \delta
\end{equation}
Therefore,
\begin{equation}\label{equ:ec6}
\sigma \leq \frac{2 \arcsin(d)}{\Phi^{-1}(1 - \delta/2)}
\end{equation}
where $\Phi^{-1}(\cdot)$ is the inverse cumulative distribution function (inverse CDF) 
of the standard normal distribution
and if we desire $\delta = 0.01$ (i.e., a 99\% probability that the distance is less than $d$, 
then:
\begin{equation}\label{equ:ec7}
\sigma \leq \frac{2 \arcsin(\tau)}{2.576}
\end{equation}
% when $\alpha$ is sufficiently small, we can use the first-order Taylor approximation:
% \begin{equation}\label{equ:ec4}
% \sin\left( \frac{\alpha}{2} \right) \approx \frac{\alpha}{2}
% \end{equation}
% Now assume that the offset $\alpha$ is sampled from a Gaussian distribution, 
% i.e., $\alpha \sim \mathcal{N}(0, \sigma^2)$,
% so $\mathbb{E}[|\alpha|] = \sigma \sqrt{\frac{2}{\pi}}$.
% Then the expected trace distance becomes:
% \begin{equation}\label{equ:ec5}
% \mathbb{E}[d] \approx \frac{1}{2} \mathbb{E}[|\alpha|] = \sigma \cdot \sqrt{\frac{1}{2\pi}}
% \end{equation}
% Since $c_i \in [0,1]$, we have
% \begin{equation}\label{equ:ec4}
% \left|\sin\left( \frac{\alpha c_i}{2} \right)\right| \leq \sin\left( \frac{\alpha}{2} \right) \leq d
% \end{equation}
\end{proof} 

Note that since $\alpha$ is sampled from a Gaussian distribution, 
there is always a non-zero probability of sampling values arbitrarily far from the mean. 
As a result, 
the inequality in \Cref{the:the1} holds with high probability (e.g., $99\%$), 
but not with absolute certainty. 
% This means that, in rare cases, the trace distance between the two quantum states may exceed the threshold $\tau$.
To ensure that the trace distance is strictly bounded by $d$ for all inputs, 
a simple remedy is to clip the sampled value of $\alpha$ to a bounded interval $[-\gamma, \gamma]$.
With this clipping, 
we can obtain the following theorem.

\begin{lemma}\label{the:the2}
Consider a classical data vector $x \in \mathbb{R}^m$ consisting of $m$ features. 
Each feature $c_j$ is encoded into a single qubit using either the $R_x$ or $R_y$ rotation gate.  
We define two encoding schemes: 
$\phi(c_j) = \pi c_j$ and 
$\phi'(c_j) = \pi c_j + \alpha$, 
where $\alpha \sim \mathcal{N}(0, \sigma^2)$ is a Gaussian offset.
To ensure that the quantum states produced by the two encodings remain within a trace distance less than $d$, 
we apply clipping to the offset as follows:
$\alpha_{\text{clipped}} = \text{Clip}(\alpha, -\gamma, \gamma)$, and the clipping bound satisfies:
\begin{equation}\label{equ:ec8}
\gamma \leq 2 \arcsin(d)
\end{equation}
\end{lemma}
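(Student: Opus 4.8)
The plan is to reuse the single-qubit trace-distance computation already carried out in the proof of \Cref{the:the1} and simply replace the probabilistic tail argument by a deterministic one. First I would recall that, for a feature encoded with $R_x$ or $R_y$, the two encodings differ only by the offset $\alpha$, and the trace distance between the resulting pure states is exactly $\tau = \left|\sin(\alpha/2)\right|$, as in \eqref{equ:ec3}. The key structural fact is that $t \mapsto \left|\sin(t/2)\right|$ is nondecreasing on $[0,\pi]$, so it suffices to control the largest magnitude that the offset can take once it has been clipped.

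Next I would invoke the clipping step directly: since $\alpha_{\text{clipped}} = \mathrm{Clip}(\alpha,-\gamma,\gamma)$, we have $|\alpha_{\text{clipped}}| \le \gamma$ for \emph{every} realization of $\alpha$, with no exceptional event. Assuming $\gamma \le \pi$ (which I would note is automatically consistent with the bound about to be derived, since $d \le 1$ forces $2\arcsin(d) \le \pi$), monotonicity gives $\tau = \left|\sin(\alpha_{\text{clipped}}/2)\right| \le \sin(\gamma/2)$. Imposing $\sin(\gamma/2) \le d$ and inverting the sine on $[0,\pi/2]$, where it is monotone, yields $\gamma/2 \le \arcsin(d)$, i.e.\ $\gamma \le 2\arcsin(d)$, which is exactly \eqref{equ:ec8}. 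The per-feature argument lifts to the full $m$-qubit encoding in the same way as in \Cref{the:the1}, because the non-canary qubits are encoded identically across the two models and only the canary qubits carry the offset.

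I would then emphasize the qualitative improvement over \Cref{the:the1}: because $|\alpha_{\text{clipped}}| \le \gamma$ holds surely rather than with probability $1-\delta$, the adjacency condition $\tau < d$ now holds for all sampled canaries, which is why the $\Phi^{-1}(1-\delta/2)$ factor (numerically $2.576$) that governed the Gaussian tail in \eqref{equ:ec7} disappears and is replaced by the exact constant $2$.

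I do not expect a genuine obstacle here, since the statement is a deterministic specialization of an identity already established; the only point requiring care is bookkeeping the domain on which $\sin$ is monotone, so that the equivalence $\sin(\gamma/2) \le d \iff \gamma \le 2\arcsin(d)$ is legitimate. This is immediate once one verifies $2\arcsin(d) \le \pi$ for $d \in [0,1]$, and everything else reduces to substituting the clipped offset into \eqref{equ:ec3}.
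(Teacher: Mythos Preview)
Your proposal is correct and matches the paper's (implicit) approach: the paper does not give a separate proof of this lemma, instead presenting it as an immediate consequence of the calculation $|\alpha| < 2\arcsin(d)$ already derived in \eqref{equ:ec4} during the proof of \Cref{the:the1}, with clipping substituted for the probabilistic tail bound. Your added care about the monotonicity domain of $\sin$ on $[0,\pi]$ is a welcome rigor check that the paper omits.
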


To ensure the validity and interpretability of our privacy audit, 
it is essential that the only difference between the two models 
lies in the quantum encoding of the canaries. 
We intentionally avoid applying the encoding scheme $\phi_2$
to the entire training dataset.
This is because encoding all training records in model $\theta_2$ 
using the perturbed encoding scheme 
$\phi'(x) = (\pi + \alpha)x$
will affect the estimation results 
and violate the central assumption 
that the models differ only in the presence and treatment of canaries.
In contract,
by restricting the encoding perturbation to the canaries alone, 
we isolate their contribution to model behavior. 
This design ensures that any observable difference in model output, 
particularly on the canary inputs, 
can be attributed solely 
to the presence of the quantum canaries themselves. 
This is critical for evaluating whether a given canary exerts an undue influence
on the model and for testing whether such influence leads to a privacy violation 
under the Lifted QDP.

%change here
\subsection{Lifted QDP-based Auditing Algorithm}\label{sec-algorithm}
Building upon the auditing framework described in \Cref{sec-framework} 
and the construction of quantum canaries introduced in \Cref{sec-qcanaries}, 
we now present the practical auditing algorithm 
that operationalizes the Lifted QDP-based privacy assessment,
which is shown in \cref{alg1}.

\begin{algorithm}[htbp]
	\renewcommand{\algorithmicrequire}{\textbf{Input:}}
	\renewcommand{\algorithmicensure}{\textbf{Output:}}
	\caption{Lifted QDP for Auditing}
	\label{alg1}
	\begin{algorithmic}[1]
        \REQUIRE Sample size $n$, number of canaries $2K$, QDP mechanism $A$, training set $\mathcal{D}$, threshold $\kappa$, failure probability $\beta$, privacy parameter $\delta$
	\ENSURE $\epsilon$ in QDP
	\FOR{$i \in [n]$}
            \STATE Randomly generate $2K$ canaries {$c_1$, $c_2$, ...,$c_k$,..., $c_{2k}$}
            \STATE $ \mathcal{D}' \leftarrow \mathcal{D} \cup \{c_1, \dots, c_{k}\}$
            \STATE Encode $\mathcal{D}'$ into qubits using angle encoding scheme $\phi_1$ and $\phi_2$ to $\mathcal{D}_0$, $\mathcal{D}_1$ respectively 
            \STATE Initialize QML with quantum circuit
            \STATE
            Train two models $\theta_0 \leftarrow \mathcal{A}(\mathcal{D}'_0)$ and $\theta_1 \leftarrow \mathcal{A}(\mathcal{D}'_1)$
            \STATE  Compute forward pass through QML weights to get prediction and compute loss $l_{z=1}^K=(f_{\theta_1}(c_z))_{z=1}^K$ and $l_{j=K+1}^{2K}=(f_{\theta_0}(c_j))_{k=K+1}^{2K}$
            \STATE Record test statistics $x^{(i)} \leftarrow \mathbb{I} (l_{z=1}^K< \kappa)$ and $y^{(i)} \leftarrow \mathbb{I}(l_{j=k+1}^{2K} < \kappa)$
		\ENDFOR
		\STATE $\quad \text{Set } p_1 \leftarrow \text{XBernLower}( \{x^{(i)}\}_{i \in [n]}, \frac{\beta}{2}) \text{ and } \bar{p_0} \leftarrow \text{XBernUpper}( \{y^{(i)}\}_{i \in [n]}, \frac{\beta}{2})$
	    \STATE $\quad  \hat{\varepsilon}_n \leftarrow \log\left(\frac{p_1 - \delta}{\bar{p_0}}\right) \text{ and a guarantee that } \mathbb{P} (\varepsilon < \hat{\varepsilon}_n) \leq \beta$
	\end{algorithmic}  
\end{algorithm}

For each of the $n$ independent trials, 
we first generate $2K$ canaries independently 
from the distribution of original training dataset $\mathcal{D}$. 
The first $K$ canaries are then appended to the $\mathcal{D}$ to form an augmented dataset $\mathcal{D}'$. 
Two versions of this augmented dataset are created: $\mathcal{D}'_0$ and $\mathcal{D}'_1$, 
differing slightly in the applied encoding schemes $\phi_1$ and $\phi_2$, 
which perturb the angle encoding by an offset parameter.
Both datasets are fed into the QML model to initialize two models 
$\theta_0$ and $\theta_1$, respectively. 
During evaluation, 
all $2K$ canaries are first mapped through 
the same quantum feature encoding ($\phi_1$ or $\phi_2$) for fair comparison.
The first $K$ encoded quantum canaries are then input to model $\theta_0$, 
while the remaining $K$ quantum canaries are input to model $\theta_1$.
For each input, we compute the corresponding loss value $l(\theta, c)$,
reflecting the model’s prediction error on the respective quantum canary $c$,
and compare it with a predefined threshold $\kappa$. 
This induces a per quantum canary rejection set $R(c)={\theta, l(\theta, c) \leq \kappa}$;
the binary decisions $\mathbb{I}$ is the indicator (memorized or not). 
We then aggregate $\mathbb{I}$ across canaries to form the audit statistic, 
which quantifies potential quantum memorization and privacy leakage.

%
%
%
%By forwarding the canaries through the trained QML models, 
%we obtain the associated loss values, 
%which are subsequently compared against a predefined threshold $\kappa$. 
%These comparisons yield binary test statistics for each quantum canary, 
%recording whether it is memorized by the QML model.

To achieve statistical validity, each trial records two sets of binary indicators 
corresponding to whether the canary-injected model 
and the baseline model satisfy the loss-based detection criterion. 
Following $n$ trials, these binary indicators are aggregated 
and processed through specially designed adaptive confidence interval estimators, 
namely XBernLower and XBernUpper,
which are based on the \texttt{eXchangeable Bernoulli} (XBern) distribution~\cite{pillutla2024unleashing}.
Finally, an empirical privacy loss bound $\hat{\epsilon}_n$ is computed according to the Lifted QDP condition, 
ensuring that with high probability at least $1-\beta$, 
the true privacy loss $\epsilon$ is bounded by $\hat{\epsilon}_n$.

We compare the sample complexity 
and of our proposed lifted QDP for auditing algorithm (\Cref{alg1}) 
with the standard QDP-based auditing algorithm, 
where each round inserts a single quantum canary and performs one test,
which are listed in \Cref{tab:complexity_comparison}.

\begin{table}[htbp]
\centering
\caption{Sample Complexity Comparison between QDP-based and Lifted QDP-based Auditing}
\label{tab:complexity_comparison}
\begin{tabular}{c|c}
\toprule
\textbf{Method} & \textbf{Sample Complexity} \\
\midrule
QDP & $\mathcal{O}\left( \frac{\ln(1/\beta)}{\Delta^2} \right)$ \\
Lifted QDP & $\mathcal{O}\left( \frac{\ln(1/\beta)}{K\Delta^2} \right)$  \\
\bottomrule
\end{tabular}
\end{table}

Sample complexity refers to the number of independent train-and-test runs 
required to obtain a reliable lower bound on $\epsilon$ with confidence $1-\beta$.
For QDP based auditing,
each run we record a Bernoulli random variable $Z=1[\operatorname{loss}< \kappa]$,
where $\kappa$ is the threshold.
Under the hypotheses canary present and canary absent,
the success probabilities are $p_{1}$ and $p_{0}$.
Their gap $\Delta =|p_{1}-p_{0}|$
is the statistical signal exploited by the audit.
Applying Hoeffding--Chernoff concentration to the empirical mean $\bar Z$ gives the classical bound $n=\mathcal{O}\!\Bigl(\tfrac{\ln(1/\beta)}{\Delta^{2}}\Bigr)$.
For Lifted QDP,
a single run now injects $K$ canaries simultaneously and declares
success only if all $K$ losses fall below $\kappa$.
Assuming the $K$ canaries are approximately independent,
in Lifted QDP, sampler complexity is $\mathcal{O}\left( \frac{\ln(1/\beta)}{K\Delta^2} \right)$
and the theoretical sample complexity is reduced by a factor of $K$.
There is noted that the saving is an \emph{upper bound} attainable,
and in QML experiments,
the extra optimization overhead (memory, longer convergence) reduce the effective gap, so the real-world speed-up largely falls short of the ideal one.

\section{Experiments}\label{sec-qmldp}

In this section, 
we evaluate the effectiveness and efficiency 
of our proposed \textit{Lifted QDP for Auditing} algorithm 
by comparing it against standard QDP auditing methods. 
Specifically, 
we demonstrate that our method achieves a more accurate bound on the privacy parameter $\epsilon$, 
offering estimates that are closer to theoretical guarantees. 
Furthermore, we validate its efficiency 
by reporting a significant reduction in runtime compared to baseline approaches.
\Cref{data} outlines the datasets and experimental settings used in our evaluation.
\Cref{performance} presents the auditing performance of \textit{Lifted QDP for Auditing}, 
highlighting both accuracy and computational gains.
In \Cref{parameters}, we further investigate how key parameters affect performance.
Finally, we summarize key findings 
and provide insights into the privacy behaviors of QML models in \Cref{finding}.
%we demonstrate its effectiveness by 
%showing that it achieves a lower bound $\epsilon$ 
%that is closer to the theoretical optimum. 
%Furthermore, we validate its efficiency by 
%reporting significantly reduced runtime compared to baseline methods.

\subsection{Dataset and Setting}\label{data}

%We selected two representative datasets, 
%the Iris dataset~\cite{tomal2024quantum} 
%and the Genomic Benchmarks (Human or Worm dataset)~\cite{singh2025modeling}, 
%to evaluate the effectiveness and efficiency of our proposed algorithm
%using two hybrid quantum classical models,  the QNN and VQSVM.
%For the VQSVM model, 
%we conducted binary classification experiments 
%using two classes from the Iris dataset (IRIS-VQSVM). 
%The QNN model was applied to both datasets, denoted as IRIS-QNN and GB-QNN. 
We selected three widely used datasets for our experiments: the Iris dataset~\cite{tomal2024quantum}, 
Genomic Benchmarks (specifically the Human and Worm subsets)~\cite{singh2025modeling}, 
and MNIST~\cite{senokosov2024quantum}. 
To enable repeated statistical evaluations and reduce computational complexity, 
we simplified the Genomic Benchmarks and MNIST datasets. 
For Genomic Benchmarks, we used 200 training samples per class. 
For MNIST, we selected only digits 0 and 1, 
using 50 samples from each class for training.

All quantum circuits employed angle encoding for feature mapping 
and the \textit{RealAmplitudes} ansatz with a linear entanglement strategy. 
The number of qubits was determined based on dataset complexity: 
4 qubits for Iris, 5 for Genomic Benchmarks, 
and 8 for MNIST.
To simulate realistic quantum conditions, we evaluated the models under two types of noise: 
depolarizing noise and measurement noise. 
The strength of the depolarizing noise is controlled by a probability parameter $p$, 
which determines the likelihood of a quantum operation 
being affected by the channel. 
Measurement noise, on the other hand, is governed by the number of measurement shots $s$, 
where a larger $s$ reduces the statistical uncertainty and mitigates the impact of sampling errors.
We explicitly inserted identity  gates into the quantum circuits, 
one per qubit, to simulate exposure to quantum noise while preserving the circuit logic. 
% The performance of Lifted QDP is illustrated in \Cref{fig:threeplots}, 
% \Cref{fig:k}, and \Cref{time}. 

\subsection{The Auditing Performance}\label{performance}

To demonstrate the effectiveness of the proposed Lifted QDP-based auditing algorithm, 
we compare its empirical estimation of the privacy parameter $\hat{\epsilon}$ 
with that of a baseline QDP-based auditing algorithm under identical trial settings. 
In addition, to assess the discrepancy 
between the actual privacy budget and its theoretical guarantee,
we introduce a another baseline based on the theoretical upper bound of $\epsilon$.
This theoretical value is proved in ~\cite{guan2023detecting},
which shows that white-box auditing method
tends to yield estimates approaching the theoretical maximum privacy loss.
As illustrated in ~\Cref{fig:sixplots}, 
each line corresponds to a different experimental configuration. 
The red curve (QDP) represents the baseline approach, 
where each trial involves designing and testing a single quantum canary.
The blue curve corresponds to the lifted QDP-based auditing method, 
where each trial incorporates $K$ quantum canaries simultaneously. 
% Both methods estimate the lower bound of $\hat{\epsilon}$ based on model behavior, rather than relying on internal noise models. 
The gray line indicates the theoretical upper bound of $\epsilon$.

\begin{figure}[htbp]
	\centering
	\subfloat[IRIS Dataset with Depolarizing Noise ($K=32$)]{ 
		\includegraphics[width=0.43\linewidth]{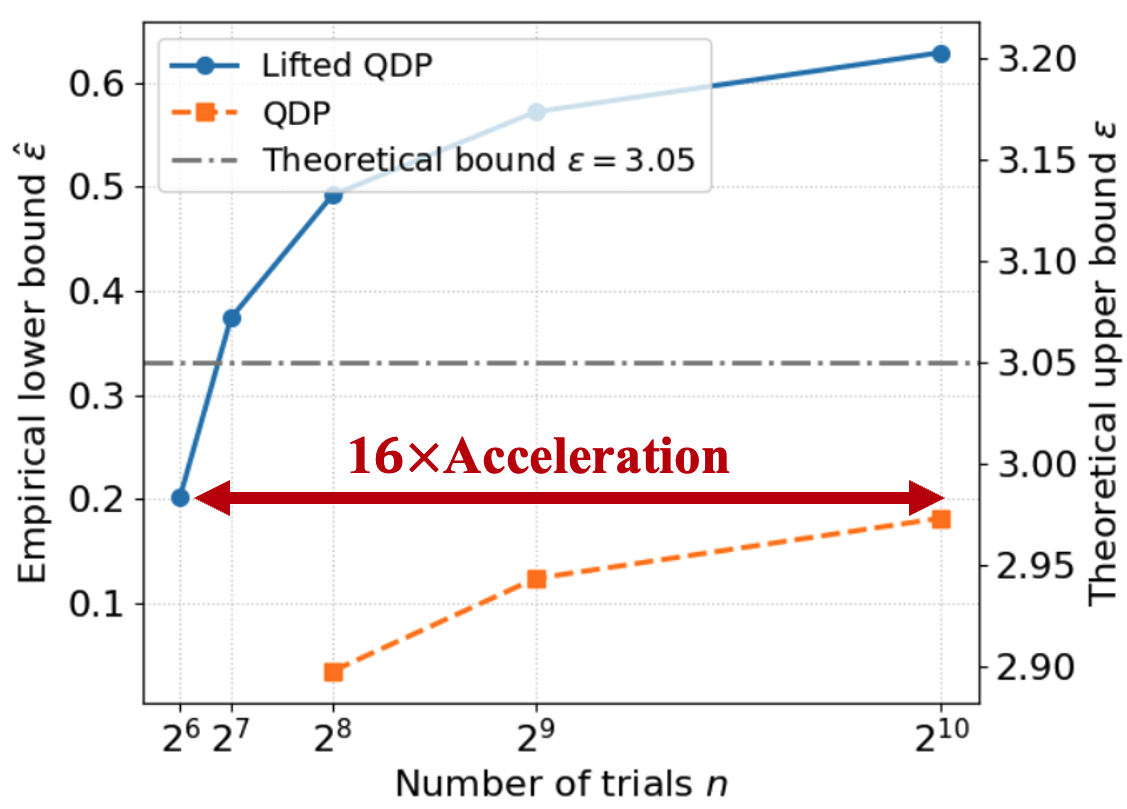}
	}
	\hspace{10pt}
	\subfloat[IRIS Dataset with Measurement Noise ($K=32$)]{ 
		\includegraphics[width=0.43\linewidth]{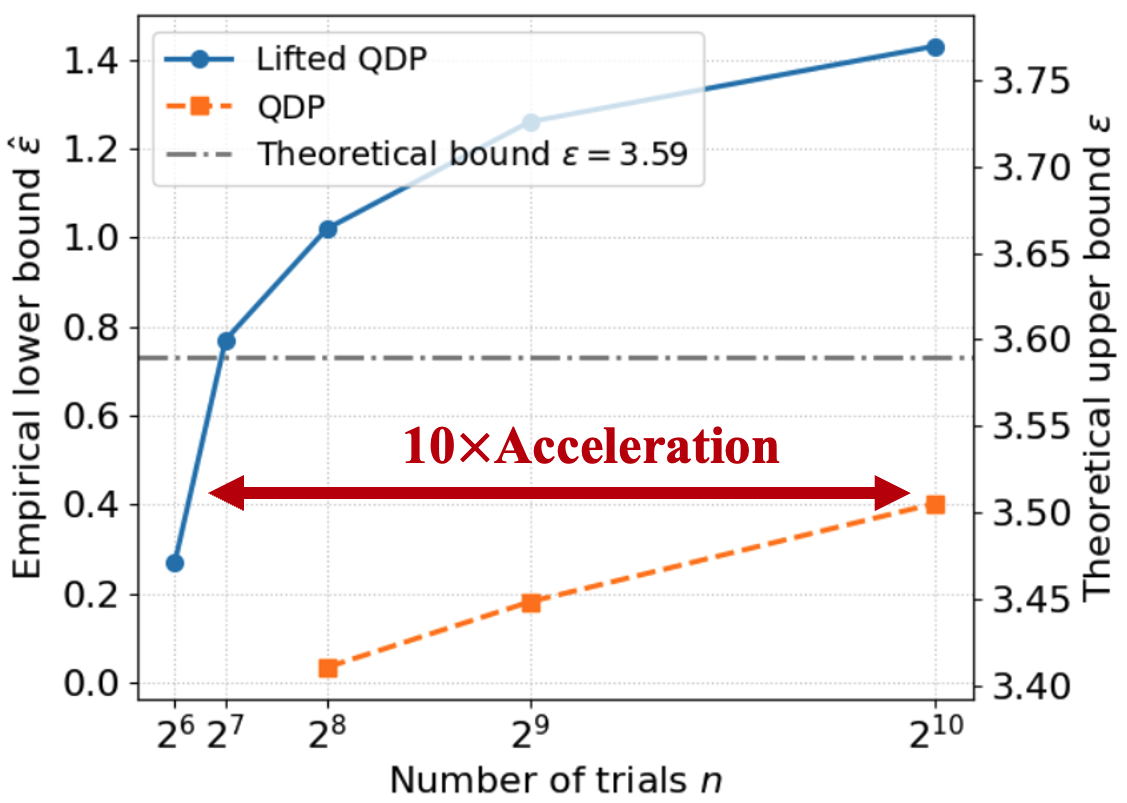}
	}

	\vspace{10pt}
	\subfloat[GB Dataset with Depolarizing Noise ($K=64$)]{ 
		\includegraphics[width=0.43\linewidth]{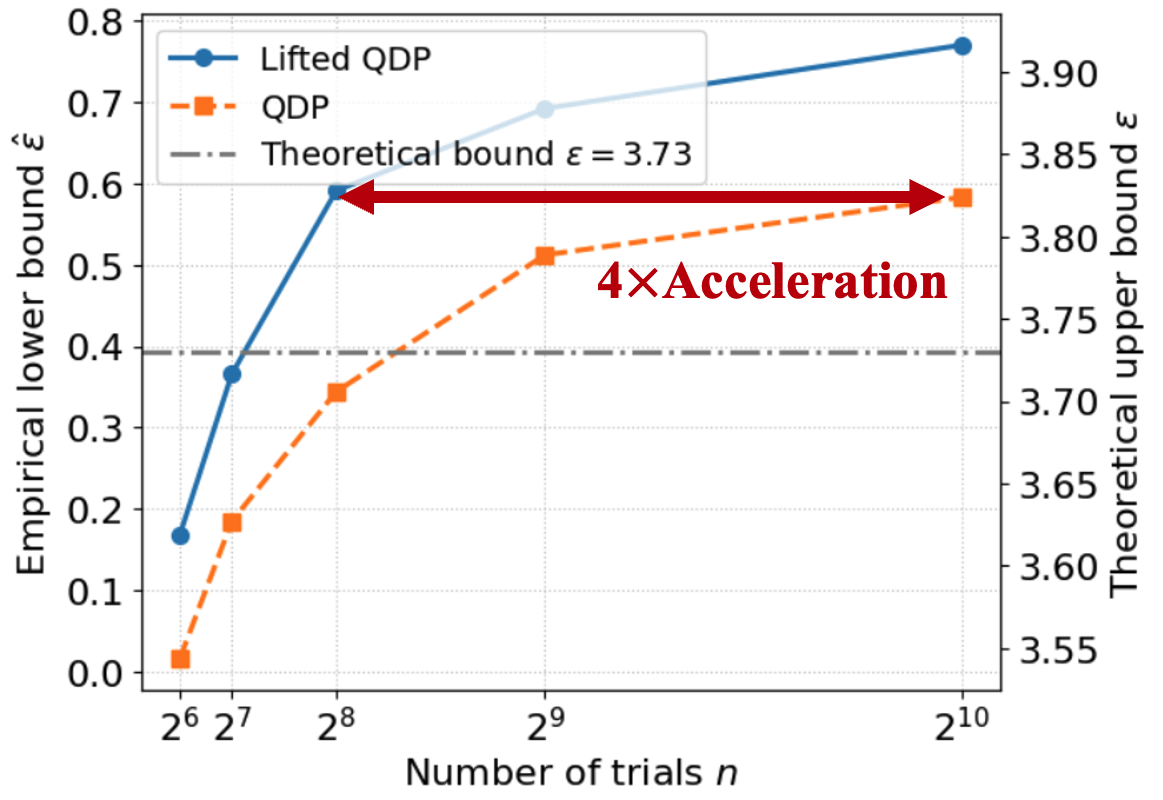}
	}
	\hspace{10pt}
	\subfloat[GB Dataset with Measurement Noise ($K=64$)]{ 
		\includegraphics[width=0.43\linewidth]{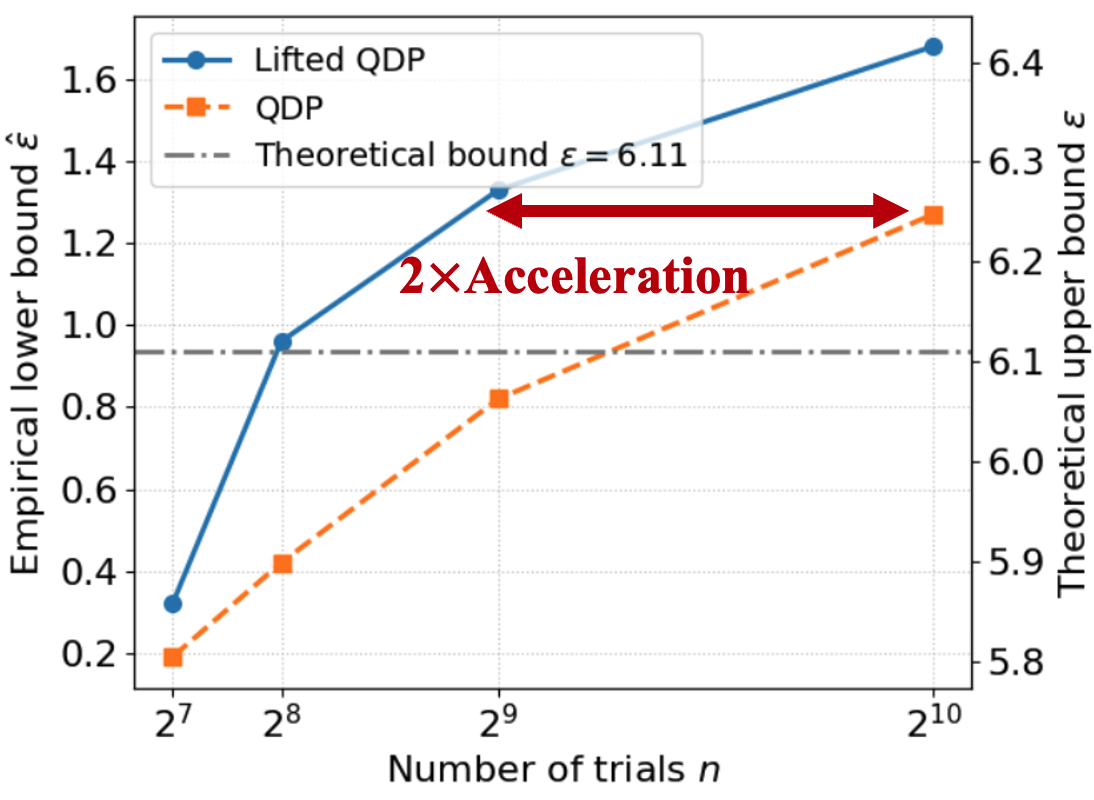}
	}

	\vspace{10pt}
	\subfloat[MNIST Dataset with Depolarizing Noise($K=16$)]{ 
		\includegraphics[width=0.43\linewidth]{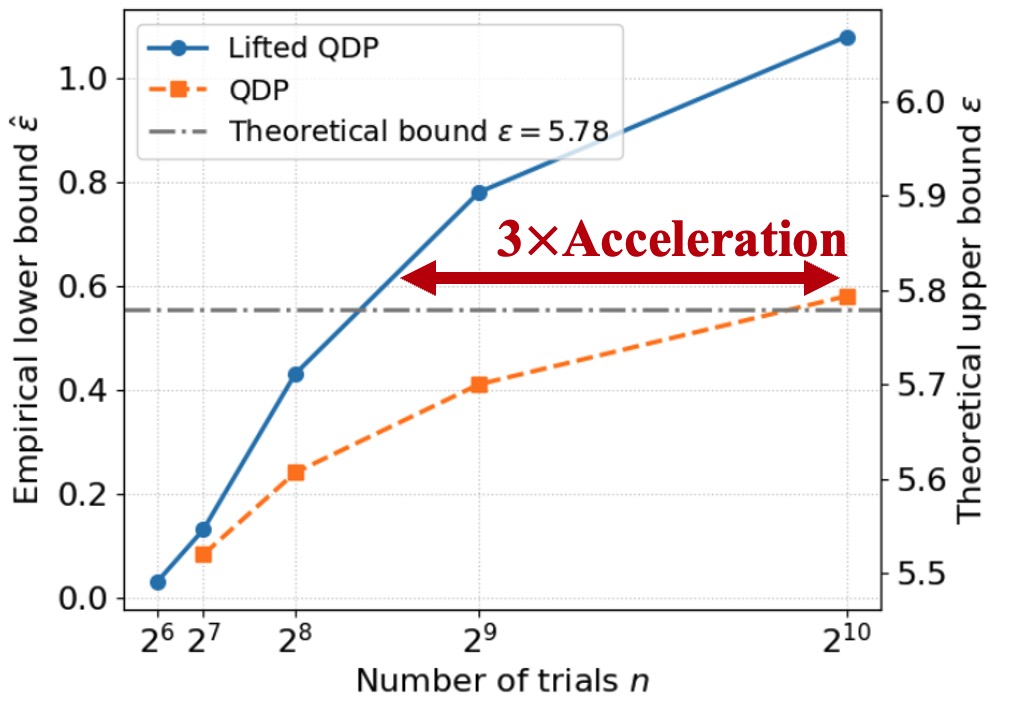}
	}
	\hspace{10pt}
	\subfloat[MNIST Dataset with Measurement Noise($K=16$)]{ 
\includegraphics[width=0.43\linewidth]{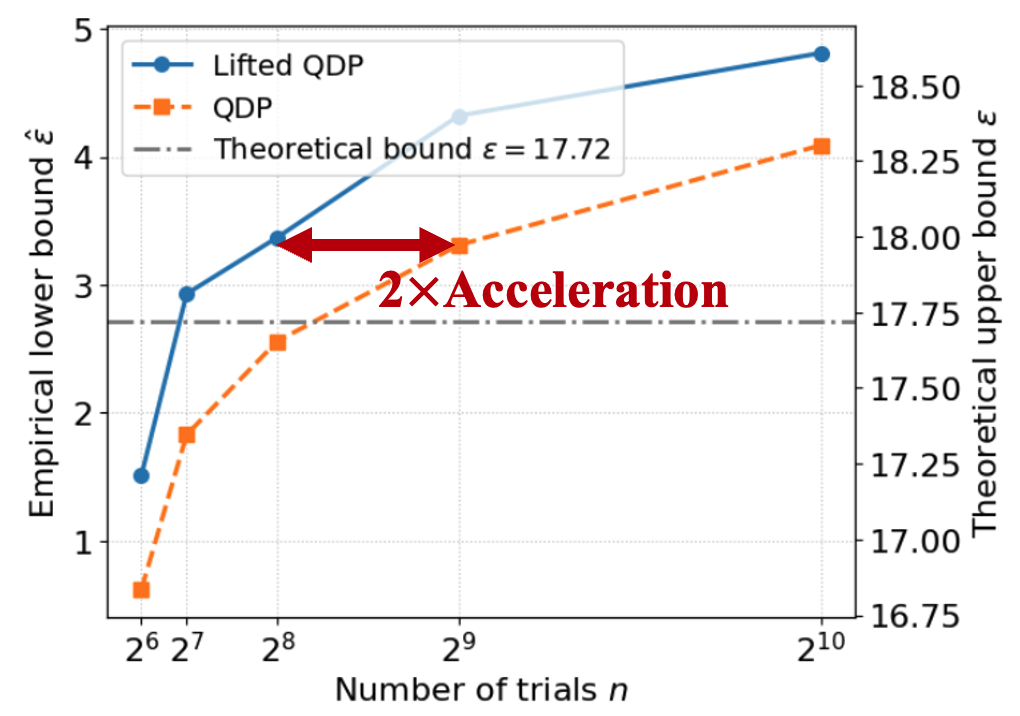}
	}
	\vspace{5pt}
	\caption{The performance of Lifted QDP auditing under different noise and models}
	\label{fig:sixplots}
\end{figure}

From ~\Cref{fig:sixplots}, 
we first observe a substantial gap between 
the theoretical and empirical values of $\epsilon$ in QML models. 
This discrepancy can be partially attributed to the structural constraints 
and parameter sparsity inherent in quantum models. 
For example, certain variational quantum circuits (VQCs) operate within low-dimensional parameter spaces, 
which limits their expressive capacity. 
At the same time, such sparsity may induce a natural "compression" effect on individual information, 
thereby reducing the model's sensitivity to injected perturbations. 
As a result, 
QML models often exhibit stronger empirical robustness against privacy leakage 
compared to the theoretical bound (worst-case assumptions).
Notably, the empirical–theoretical gap in 
$\epsilon$ is larger on \textsc{MNIST}, 
possibility due to the dimensionality reduction 
and subsequent mapping to only eight qubits, 
which compresses the input and reduces the sensitivity.

In addition, when comparing the Lifted QDP-based auditing method with the standard QDP approach, 
we observe that the number of trials 
required to reach the same estimated privacy value $\hat{\epsilon}$
can be reduced by up to 16× and at minimum by 2×,
highlighting the substantial efficiency gains enabled by the lifted design.
The actual degree of acceleration depends on both the data distribution 
and the expressive capacity of the underlying quantum circuit.
%find that 
%achieving the same estimated value $\hat{\epsilon}$ requires up to 4× fewer trials under lifted QDP. 
%This highlights the significant improvement in auditing efficiency brought by the lifted design. 
Moreover, as the number of trials $n$ increases, 
the estimated $\hat{\epsilon}$ also grows and the rate of increase gradually diminishes. 
This is because, in the early stages, additional trials substantially enhance the identification of model behavior 
and improve the chance of capturing hidden leakage. 
However, as $n$ becomes large, 
the marginal impact of each new trial diminishes, 
leading the estimate $\hat{\epsilon}$ to converge.
Finally, we compare the effects of depolarizing noise and measurement noise
under the auditing framework across three datasets.
We find that for a fixed $\hat{\epsilon}$, 
depolarizing noise  achieves acceleration with fewer trials 
than measurement noise in the majority of settings.
%
%depolarizing noise more effectively accelerates convergence in most cases.
This is likely because depolarizing noise acts uniformly on the quantum state,
which can amplify subtle differences between neighboring quantum encodings,
thereby improving the detectability of privacy leakage signals during auditing.

To further demonstrate the efficiency of the lifted QDP auditing algorithm, 
we recorded the runtime of models
under varying numbers of test canaries, 
as shown in~\Cref{time}. 
Each model is with the number of test canaries $K$ matching the configurations used in~\Cref{fig:sixplots}.
It can be observed that, 
across different values of $n$, 
the runtime of the QDP increases substantially compared to the Lifted QDP. 
For the Iris dataset ($K=32$), 
the Lifted QDP achieves a runtime speedup of approximately 26× over the standard QDP.
For MNIST ($K=16$),
we observe about a 10× speedup.
For the Genomic Benchmarks ($K=64$), 
the speedup is even more pronounced:
around 30× under depolarizing noise and up to 50× under measurement noise.
The difference in speedup across noise types is mainly driven 
by times of measurement and quantum circuit complexity. 
When circuits are relatively simple (fast to train) but involve many measured qubits 
and a large number of shots, 
measurement noise tends to deliver more pronounced speedups.
%The difference in speedup across noise types 
%is primarily due to increased measurement overhead 
%as the number of qubits grows,
%a factor that amplifies the efficiency advantage of auditing all qubits simultaneously, 
%as enabled by the lifted framework.
Besides,
while the theoretical maximum speedup is expected to scale with  $K$,
the observed runtime improvements are slightly smaller.
This is because increasing $K$ also expands the augmented training dataset,
which in turn increases the per-model training time.
As a result, 
the actual speedup factor is less than the idealized $K$-fold improvement.
%In the Lifted QDP, 
%the additional training overhead depends on the value of $K$, 
%which influences both the number of test canaries per trial 
%and the size of trained data,
%which affect the time for per model training time. 
%Notably, the performance gain from Lifted QDP is more obvious 
%in the VQSVM model than in the QNN model. 
%This is primarily because VQSVM training is inherently slower due to the need to compute and store pairwise kernel matrices, 
%and it is more sensitive to the number of training samples, 
%exhibiting poorer scalability with increasing dataset size.

\begin{table}[htbp]
\footnotesize
\centering
\caption{Runtime comparison (in minutes) between QDP and Lifted QDP}
\label{time}
\begin{tabularx}{\linewidth}{c|c*{4}{>{\centering\arraybackslash}X}}
\toprule
\textbf{Dataset-Noise} & \textbf{Method} & $2^{8}$& $2^{9}$ & $2^{10}$ & $2^{11}$   \\
\midrule
\multirow{2}{*}{IRIS-Depolarizing}
& QDP         &198.90 & 389.47 & 780.51 & 1560.59                       \\
& Lifted QDP  & 7.91 & 15.42  & 30.69   & 60.43    \\
\midrule
\multirow{2}{*}{IRIS-Measurement}
& QDP         & 106.07 & 212.88 & 425.97 & 851.52                         \\
& Lifted QDP  & 4.18   & 8.35  & 16.73  & 33.39   \\
\midrule
\multirow{2}{*}{GB-Depolarizing}
& QDP         & 500.62  & 1099.59  & 2199.20 & 4398.92  \\
& Lifted QDP  & 16.60    & 32.38   & 65.88   & 130.59      \\
\midrule
\multirow{2}{*}{GB-Measurement}
& QDP         & 399.60  & 799.11  & 1598.47 & 3196.83  \\
& Lifted QDP  & 7.68    & 14.99   & 29.60   & 58.77      \\
\midrule
\multirow{2}{*}{MNIST-Depolarizing}
& QDP         & 1889.66 & 3780.45 & 7562.43 & 15127.11\\
& Lifted QDP  & 202.02   & 403.89   & 808.23   & 1616.98    \\
\midrule
\multirow{2}{*}{MNIST-Measurement}
& QDP         & 924.20  & 1848.53  & 3698.72 & 7397.28   \\
& Lifted QDP  & 68.01    & 136.73   & 272.77   & 544.13      \\
\bottomrule
\end{tabularx}
\end{table}

\textbf{Validation on Real Quantum Hardware}: 
in order to further validate the practicality of our algorithm, 
we also used the IRIS dataset and executed the IRIS auditing workflow
on a real quantum device via the IBM Quantum platform. 
Specifically, we deployed our implementation on the \texttt{ibm_brisbane} backend.
This backend offers mid-scale quantum computation capabilities 
with support for dynamic circuits and high-fidelity single- and two-qubit gates,
making it suitable for executing parameterized quantum circuits under realistic hardware constraints.
In addition to standard depolarising and measurement noise, 
the \texttt{ibm_brisbane} device introduces crosstalk, flux-bias drift, and read-out nonlinearities, 
sources that are notoriously hard to model faithfully in simulation. 
By capturing these hardware-specific imperfections, 
the experiment offers a more realistic assessment of how much memorisation signal survives in practice.
Due to runtime limitations imposed by the platform, 
we only performed a small-scale audit involving $k = 16$ quantum canaries 
and $\num{3}$ independent training runs,
each repeated over 3 training epochs.
The training and inference process followed our lifted QDP auditing framework, 
using offset-encoded quantum canaries.
The output predictions on the canaries were collected after each training run 
and statistically aggregated to estimate the empirical privacy leakage.

In this setting, 
the estimated leakage bound in this setting was $\hat{\varepsilon} = 0.031$.
This small result has two main reasons. 
First, real quantum hardware introduces additional noise sources beyond depolarizing and measurement noise, 
which may obscure subtle leakage signals. 
Second, the small values of $K$ and $n$ used in the test limit the statistical power of the audit, 
naturally resulting in a lower estimate of $\hat{\epsilon}$.
Moreover, the use of only $\num{3}$ training epochs may not sufficiently amplify memorization behaviors in the model. 
Nonetheless, this experiment demonstrates the deployability of our lifted QDP auditing algorithm 
in practical quantum computing environments.

\subsection{Parameter Sensitivity Analysis}\label{parameters}
Beyond the number of trials $n$, 
three additional parameters play a critical role in shaping the auditing performance of the Lifted QDP: the number of quantum canaries per trial $k$,
the input encoding offset $\sigma$,
and the noise magnitude (the depolarizing probability $p$ in depolarizing noise, 
and the number of measurement shots $N$ in measurement noise).
These parameter effects are illustrated in \Cref{varyk}, \Cref{varysigma}, and \Cref{varyn}, respectively.

\begin{figure}[htbp]
	\centering 
	\subfloat[IRIS]{ 
		\label{kfig:subfig1}
		\includegraphics[width=0.65\linewidth]{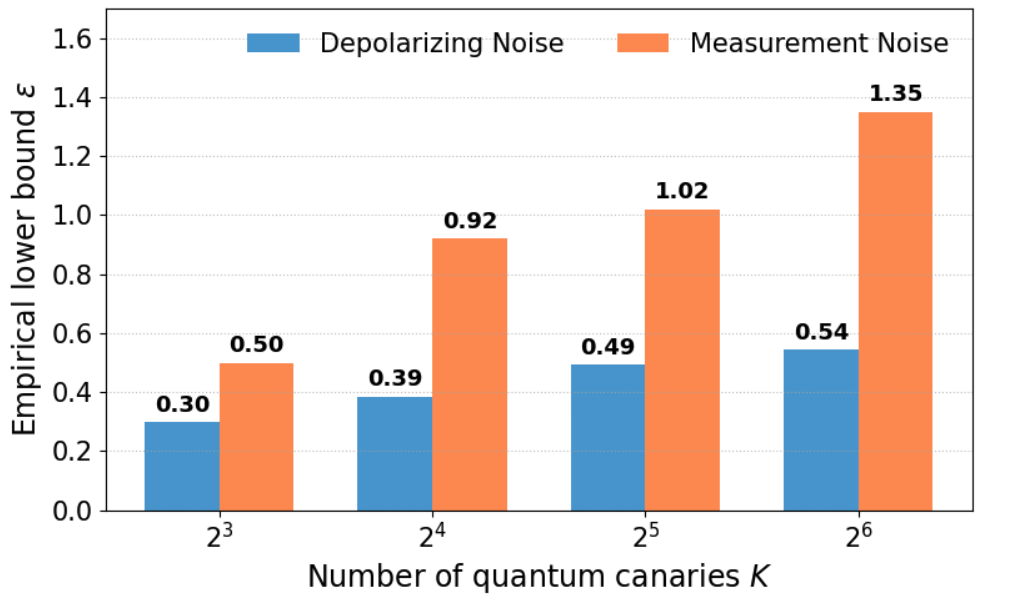}} 
	\hspace{5pt}
	\subfloat[Genomic Benchmarks]{ 
		\label{fig:subfig2}
		\includegraphics[width=0.65\linewidth]{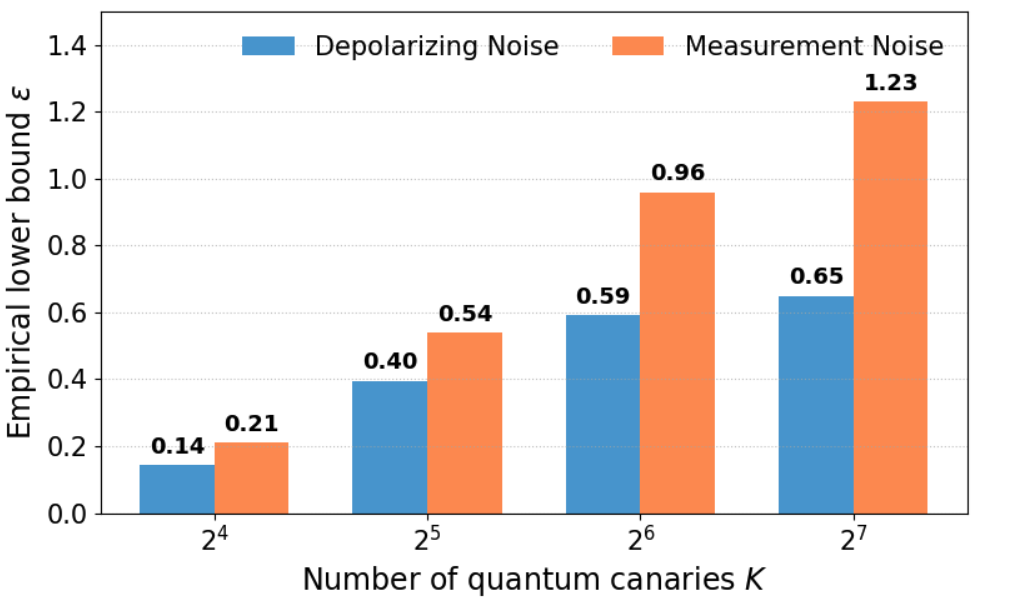}} 
	\hspace{5pt}
	\subfloat[MNIST]{ 
		\label{fig:subfig3}
		\includegraphics[width=0.65\linewidth]{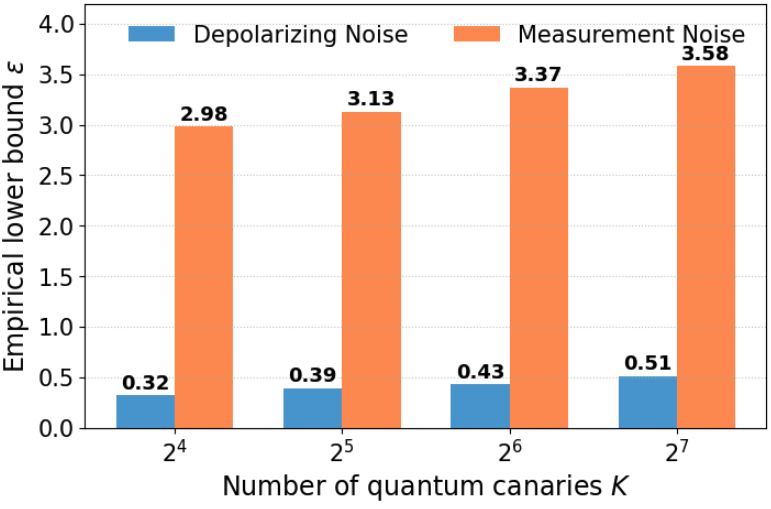}} 
	\vspace{5pt}
	\caption{The bound of $\hat{\epsilon}$ by varying $K$} 
	\label{varyk}
\end{figure}

\textbf{The number of quantum canaries $K$}: 
\Cref{varyk} illustrates the estimated values of $\hat{\epsilon}$ 
across the three models under both depolarizing and measurement noise, 
with varying numbers of quantum canaries per trial $K$. 
The depolarizing probability was fixed at $p = 0.01$, 
the number of measurement shots was set to $N = 32$, 
and the number of trials was set to $n = 256$.
We observe that as $k$ increases, 
the estimated $\hat{\epsilon}$ also increases, but the rate of growth gradually diminishes. 
This trend can be attributed to the fact that when $K$ is small, 
adding each additional quantum canary significantly amplifies the probability difference, 
thereby increasing the measured privacy budget. 
However, as $k$ continues to grow, 
the event space begins to saturate success probabilities approach 0 or 1,
leading to diminishing marginal contributions from each added quantum canary. 

In addition, we observe that in \Cref{kfig:subfig1},
when the theoretical privacy bounds for both noise types are close,
$\hat{\epsilon}$ under measurement noise are consistently higher than those under depolarizing noise. 
%This effect is particularly evident in \Cref{kfig:subfig1}, 
%where the theoretical privacy bounds for both noise types are close, 
%yet the observed leakage differs significantly.
This suggests that, 
for a given privacy budget, 
applying depolarizing noise results in lower actual privacy leakage 
compared to measurement noise.
One possible explanation is that depolarizing noise introduces uniform perturbations in quantum channel, 
which may obscure fine-grained model behavior more effectively. 
In contrast, 
measurement noise affects only the readout layer 
and may leave deeper model-level memorization signals intact, 
thus leading to greater effective leakage.
These findings highlight the importance of noise model selection in QML privacy protection: 
even when theoretical bounds are comparable, 
the practical leakage behavior can vary considerably, 
and depolarizing noise may offer stronger empirical privacy guarantees 
under the same theoretical constraints.
% This reflects a saturation effect in detectability, where further increasing $k$ yields smaller improvements in auditing resolution.

%\begin{figure}[htbp]
%	\centering 
%	\subfloat[Depolarizing Noise]{ 
%		\label{fig:subfig1}
%		\includegraphics[width=0.43\linewidth]{figure/sigma.jpg}} 
%	\hspace{5pt}
%	\subfloat[Measurement Noise]{ 
%		\label{fig:subfig2}
%		\includegraphics[width=0.43\linewidth]{figure/sigma22.jpg}} 
%	\vspace{5pt}
%	\caption{The relationship between the $\hat{\epsilon}$ and $\sigma$} 
%	\label{varyn}
%\end{figure}

\textbf{The encoding offset $\sigma$}: 
We further investigate how the input encoding offset $\sigma$ 
affect the estimated lower bound of $\varepsilon$.
To begin with, 
we visualize how different values of $\sigma$ influence the angle distribution of individual qubit after encoding, 
as shown in \Cref{visual}.
Here, the encoding scheme $\phi_1$ refers to direct mapping,
while $\phi_2$ applies an additive angular offset 
sampled from a Gaussian distribution with standard deviation $\sigma$.
As illustrated, when $\sigma$ is small,
the distributions of qubit angles under 
$\phi_1$ and $\phi_2$ remain almost indistinguishable,
indicating negligible perturbation at the encoding level.
As $\sigma$ increases,
the angular spread under $\phi_2$ becomes significantly broader,
resulting in larger trace distances between the quantum states 
produced by the two encoding schemes.
This growing separation directly impacts the level of privacy leakage observed:
since greater trace distance corresponds to larger distinguishability 
between neighboring quantum states,
the privacy budget also increases accordingly.

\begin{figure}[htbp]
	\centering 
	\subfloat[$\sigma=0.01$]{ 
		\label{vfig:subfig1}
		\includegraphics[width=0.45\linewidth]{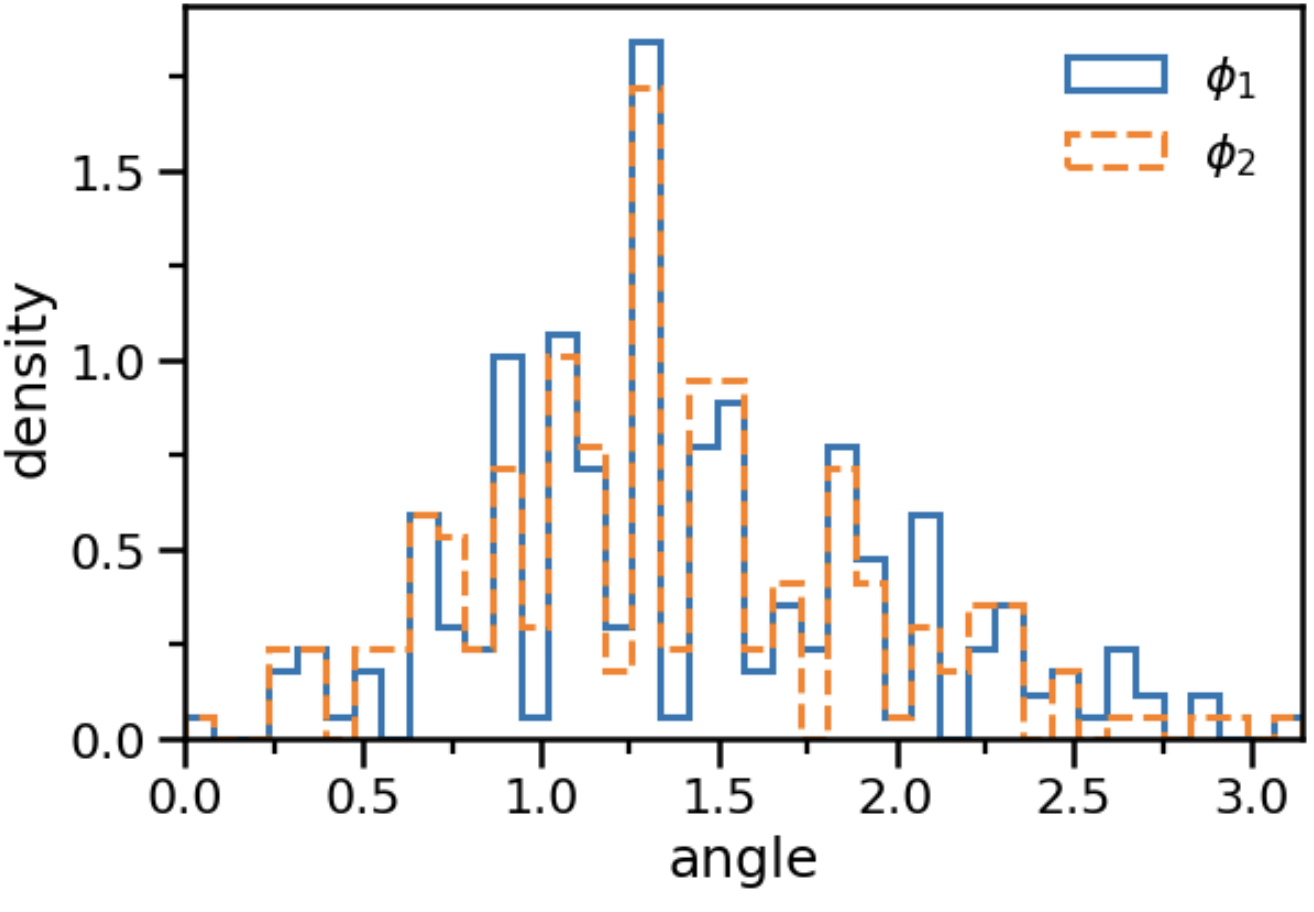}} 
	\hspace{5pt}
	\subfloat[$\sigma=0.05$]{ 
		\label{vfig:subfig2}
		\includegraphics[width=0.45\linewidth]{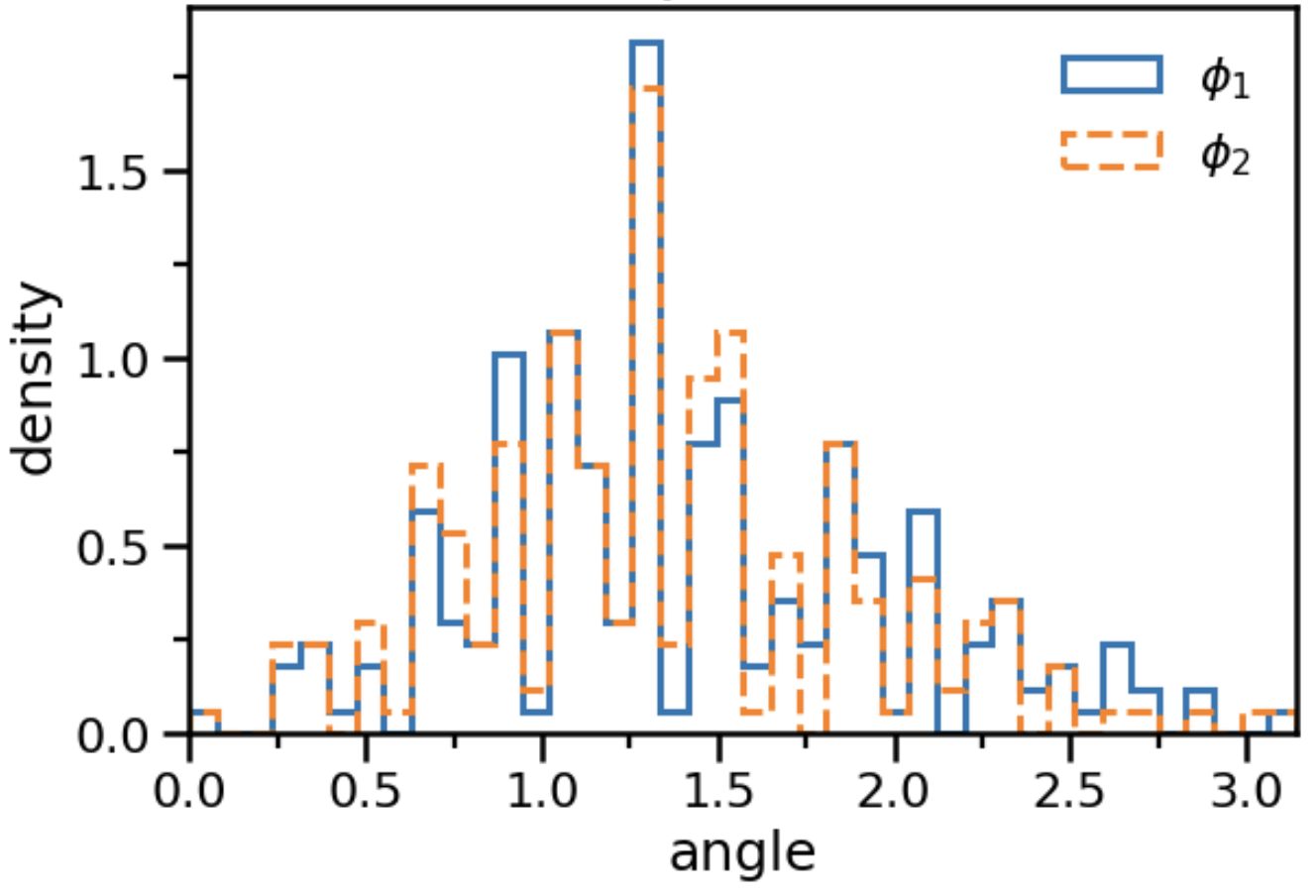}} 
	\hspace{5pt}
	\subfloat[$\sigma=0.1$]{ 
		\label{vfig:subfig3}
		\includegraphics[width=0.45\linewidth]{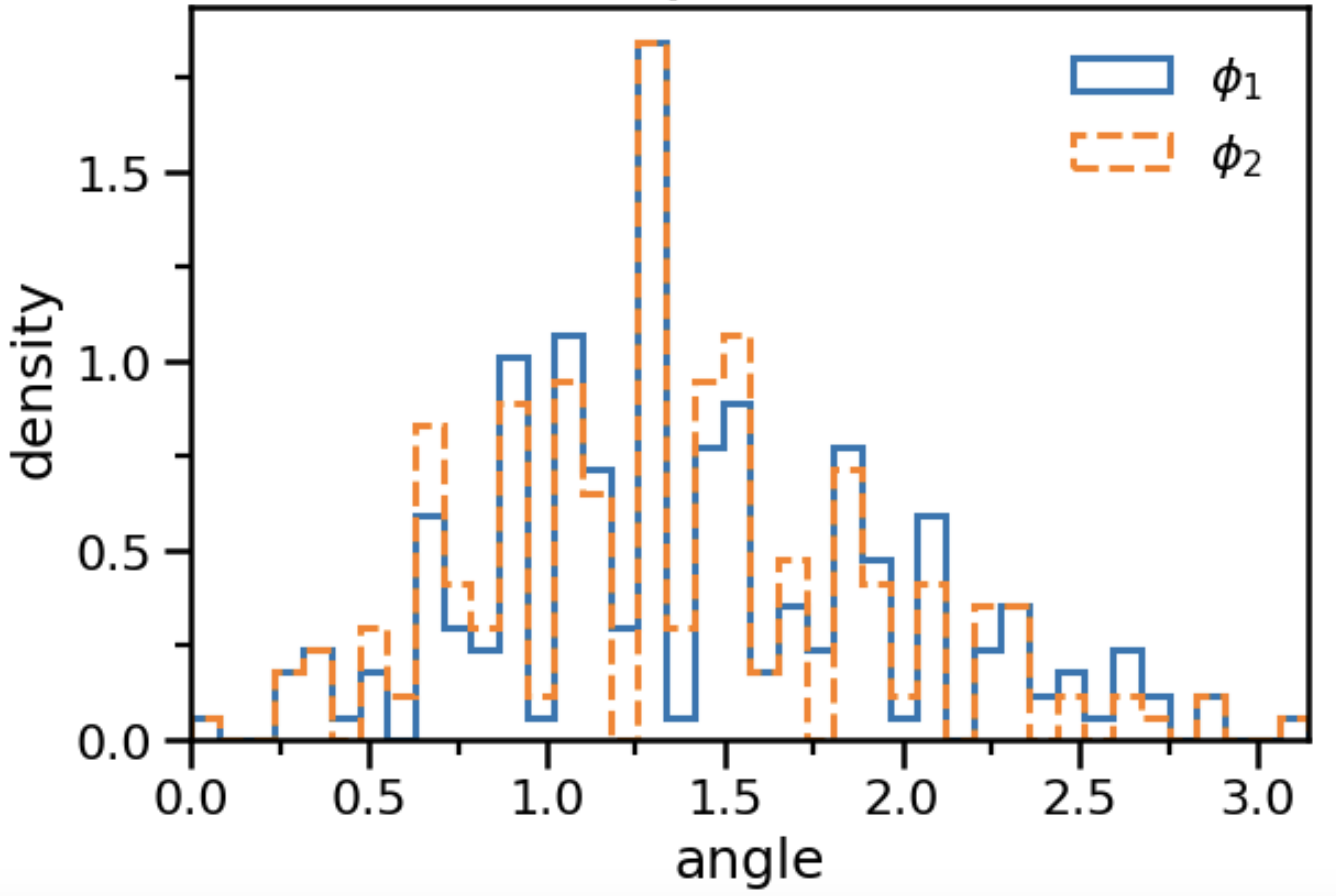}} 
	\vspace{5pt}
	\caption{Distribution of qubit angles after different encoding schemes} 
	\label{visual}
\end{figure}

In \Cref{varyn},  
increasing the value of $\sigma$ always pushes the empirical estimate $\hat{\epsilon}$ upward 
across different noise settings.
According to \Cref{the:the2}, 
$\sigma$ directly influences the bound on the trace distance between encoded quantum states, 
which in turn affects the privacy budget bound. 
Moreover, 
the increase effect becomes more evident when the shot number  $n$ is large.
When $n$ is small, 
the count statistics are coarse, 
so $\hat{\epsilon}$ is systematically under-estimated and different $\sigma$ values are partially “masked” by the large sampling variance.
As $n$ grows, 
the estimator approaches its asymptotic (unbiased) regime, and the contribution of $\sigma$ emerges clearly, 
causing the gaps between curves to widen.

\begin{figure}[htbp]
	\centering
	\subfloat[IRIS Dataset with Depolarizing Noise]{ 
		\includegraphics[width=0.43\linewidth]{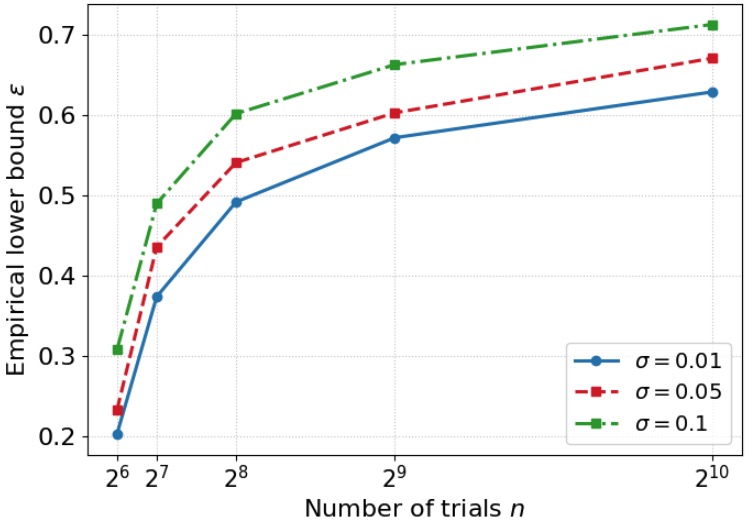}
	}
	\hspace{10pt}
	\subfloat[IRIS Dataset with Measurement Noise ]{ 
		\includegraphics[width=0.43\linewidth]{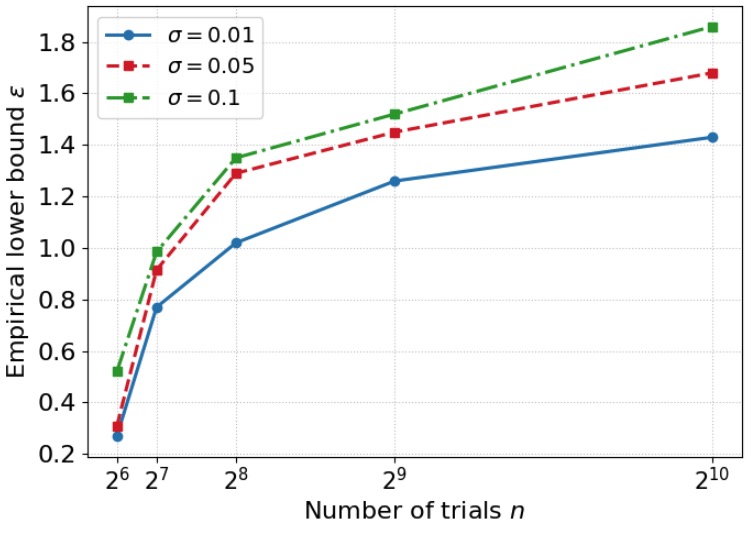}
	}
	
	\vspace{10pt}
	\subfloat[GB Dataset with Depolarizing Noise]{ 
		\includegraphics[width=0.43\linewidth]{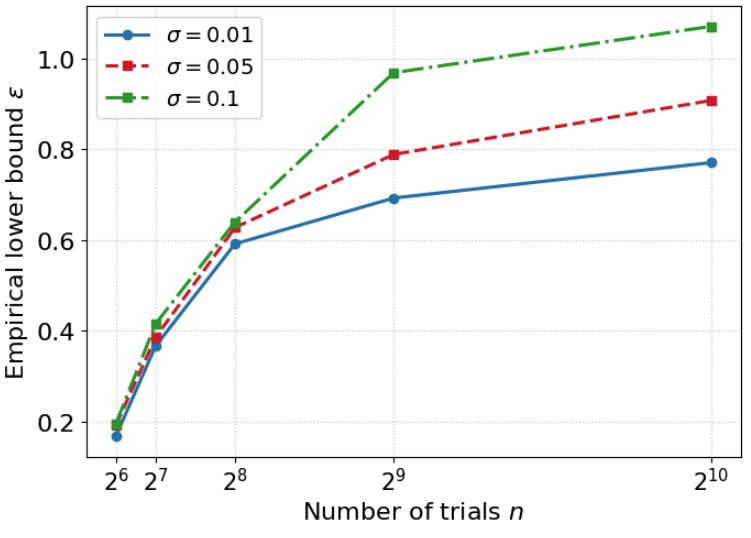}
	}
	\hspace{10pt}
	\subfloat[GB Dataset with Measurement Noise]{ 
		\includegraphics[width=0.43\linewidth]{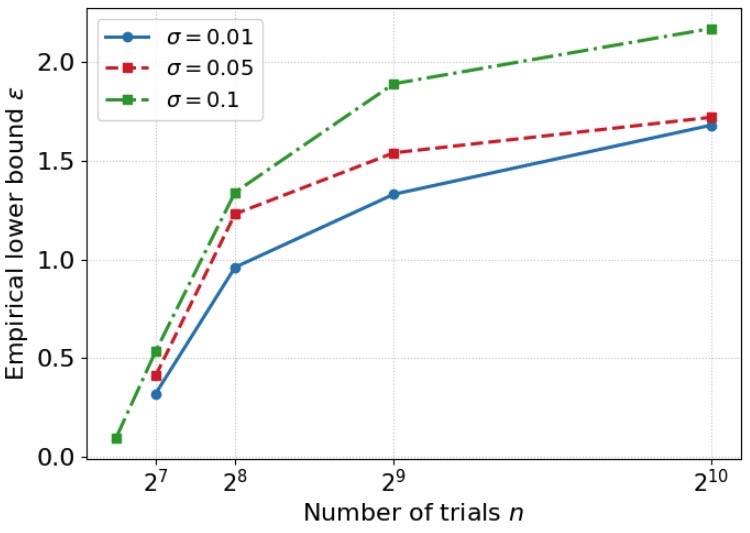}
	}
	
	\vspace{10pt}
	\subfloat[MNIST Dataset with Depolarizing Noise]{ 
		\includegraphics[width=0.43\linewidth]{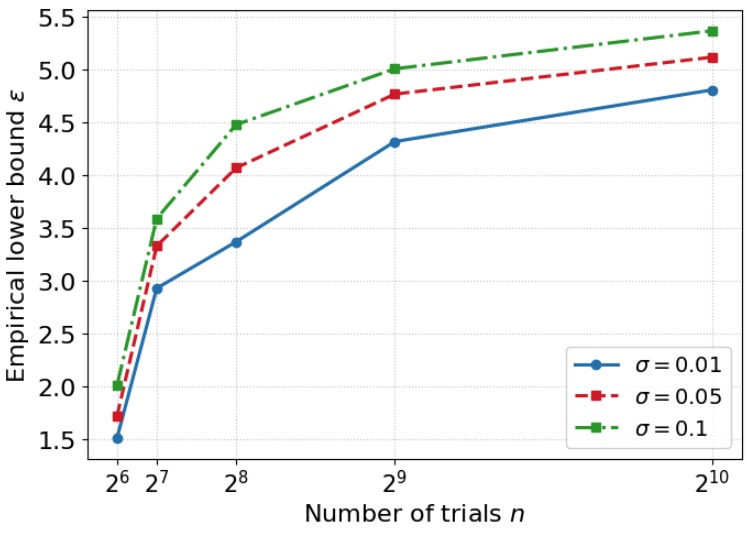}
	}
	\hspace{10pt}
	\subfloat[MNIST Dataset with Measurement Noise]{ 
		\includegraphics[width=0.43\linewidth]{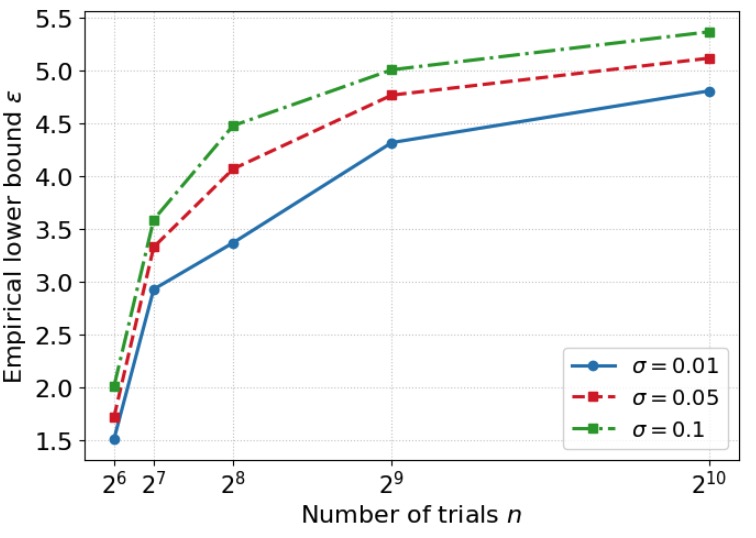}
	}
	\vspace{5pt}
	\caption{The relationship between $\hat{\epsilon}$ and $\sigma$}
	\label{varyn}
\end{figure}

\textbf{Noise magnitude ($p$ or $N$)}: \Cref{varysigma} illustrates the relationship between noise amplitude and the estimated privacy budget $\hat{\epsilon}$. 
As indicated by \eqref{equ:12} and \eqref{equ:20}, 
decreasing the depolarizing probability $p$ 
or increasing the number of measurement shots $N$ leads to a reduction in $\hat{\epsilon}$. 
This is intuitive: a smaller $p$ means a lower chance of qubit depolarization, 
while a larger $N$ reduces statistical uncertainty in measurements, 
both resulting in less noise and improved privacy protection.
In addition, we observe that when the number of trials $n$ is small, 
varying $p$ or $N$ has little impact on the estimated $\hat{\epsilon}$. 
This is because under limited sampling, the dominant source of error arises from 
statistical variance rather than noise amplitude. 
Only when $n$ is sufficiently large do the benefits of 
reducing depolarization or increasing measurement 
precision manifest more clearly in the observed privacy budget estimates.

%As shown in \Cref{varysigma}, increasing the value of $\sigma$ consistently leads to a reduction in the estimated $\hat{\epsilon}$ across different noise settings.
%It can be explained by $\sigma$ can effectively control the distance
%between perturbed and unperturbed qubits in the feature space. 
%According to \Cref{the:the2}, 
%$\sigma$ directly influences the bound on the trace distance between encoded quantum states, 
%which in turn affects the privacy budget bound. 
%Specifically, 
%a larger $\sigma$ results in treating quantum states with higher trace distance as neighboring states under the QDP definition, 
%thereby enforcing stronger privacy guarantees. 
%This leads to a tighter lower bound on privacy budget and consequently a smaller estimated $\hat{\epsilon}$.

\begin{figure}[htbp]
	\centering
	\subfloat[IRIS Dataset with Depolarizing Noise]{ 
		\includegraphics[width=0.43\linewidth]{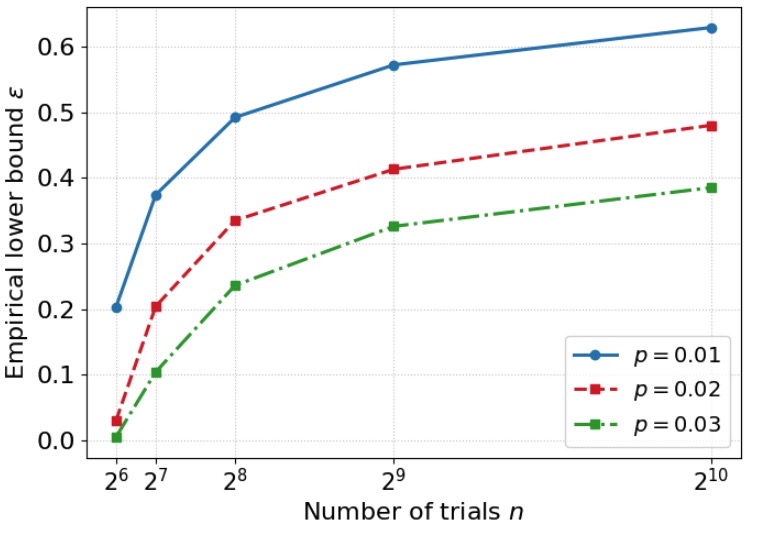}
	}
	\hspace{10pt}
	\subfloat[IRIS Dataset with Measurement Noise ]{ 
		\includegraphics[width=0.43\linewidth]{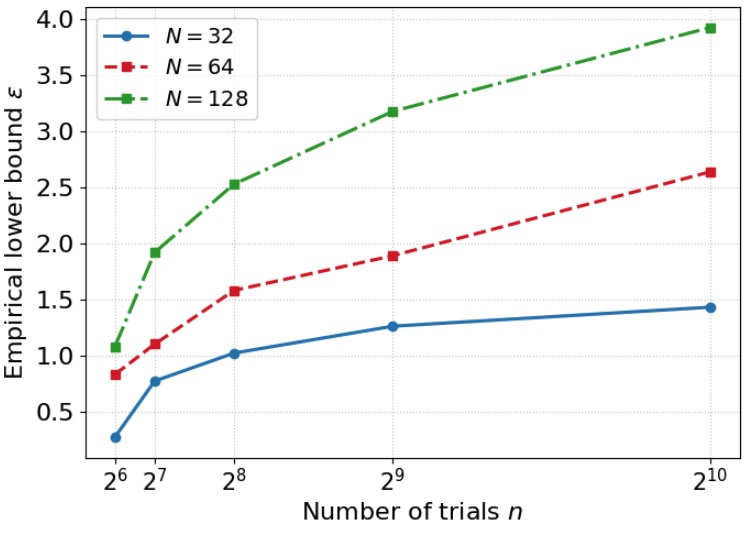}
	}
	
	\vspace{10pt}
	\subfloat[GB Dataset with Depolarizing Noise]{ 
		\includegraphics[width=0.43\linewidth]{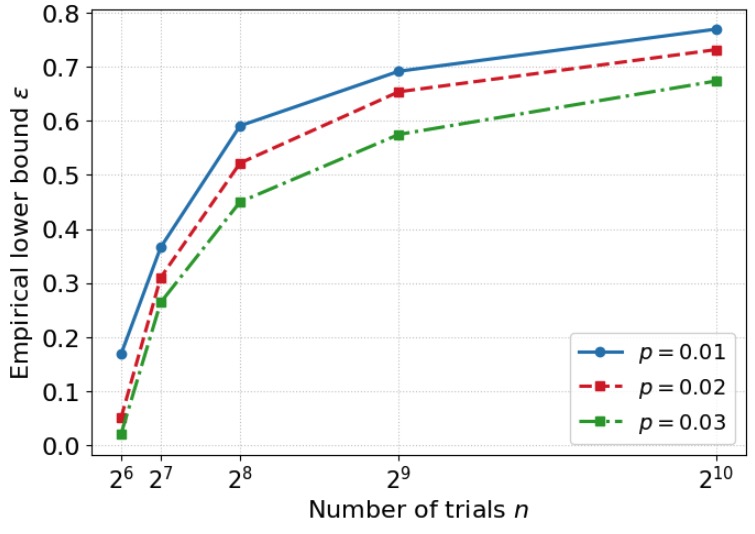}
	}
	\hspace{10pt}
	\subfloat[GB Dataset with Measurement Noise]{ 
		\includegraphics[width=0.43\linewidth]{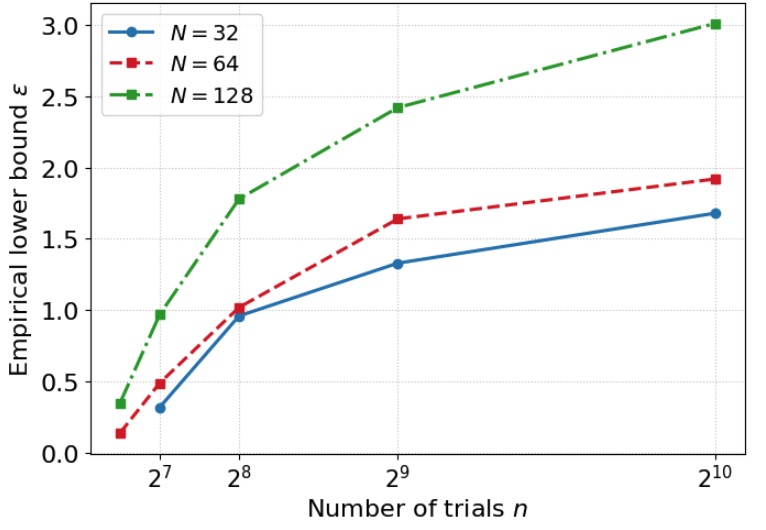}
	}
	
	\vspace{10pt}
	\subfloat[MNIST Dataset with Depolarizing Noise]{ 
		\includegraphics[width=0.43\linewidth]{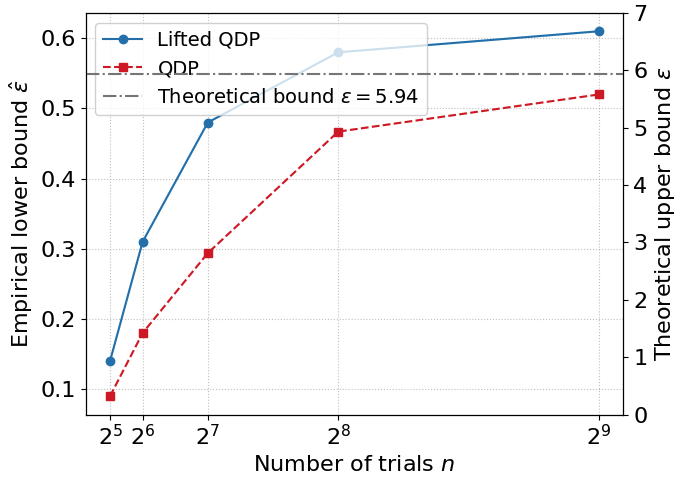}
	}
	\hspace{10pt}
	\subfloat[MNIST Dataset with Measurement Noise]{ 
		\includegraphics[width=0.43\linewidth]{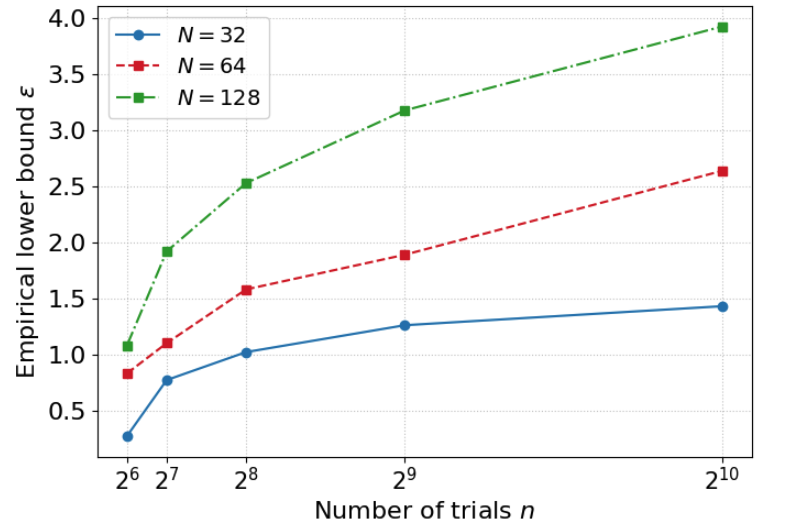}
	}
	\vspace{5pt}
	\caption{The relationship between the $\hat{\epsilon}$ and noise amplitude}
		\label{varysigma}
\end{figure}

\subsection{Finding and Discussion}\label{finding}

Our simulations and hardware runs show that QML models 
offer a built-in privacy advantage over classical models. 
In practice, QMLmodels leak less information than their theoretical privacy budgets predict. 
Three factors explain this: (i) limited circuit expressivity keeps variational quantum circuits from fitting arbitrarily complex decision boundaries; 
(ii) barren plateaus hamper optimization~\cite{mcclean2018barren}, preventing the model from memorizing the training set; 
and (iii) hardware noise injects randomness that further obscures individual records. 
Moreover, real-world QML models usually compress high-dimensional data  
before encoding it into the small number of qubits, 
discarding fine-grained details and lowering leakage risk even more. 
These properties make QML especially attractive for privacy-critical domains, such as healthcare or genomics, 
where a slight trade-off in accuracy is acceptable in exchange for stronger protection of sensitive data.

We also observe that, 
for a fixed theoretical privacy budget, 
depolarizing noise tends to yield stronger empirical privacy guarantees than measurement noise.
This finding implies that, 
when apportioning the overall privacy budget, 
we can allocate a smaller share to the depolarizing channel and reserve a larger share for the measurement stage,
thereby further reducing the risk of privacy leakage in practice.

Within the Lifted QDP auditing framework, 
introducing multiple quantum canaries can drastically shorten the time needed to estimate privacy leakage 
and produces  $\hat{\epsilon}$ values that more closely match the theoretical bound. 
It is important to emphasize, 
$\hat{\epsilon}$ depends on far more than the listed parameters,
canary count $K$,  trial number $n$,
and offset $\sigma$.
Additional factors such as circuit depth and the distribution of data
also influence the effective leakage; 
many of these details are hidden in a black-box setting, 
which is precisely why black-box auditing is necessary. 
The present experiments therefore serve only to demonstrate the feasibility and efficiency of the Lifted QDP auditing framework. 
They should not be interpreted as a universal rule 
that adding a given amount of theoretical noise will 
always produce a proportional reduction in leakage; 
each model–hardware combination must be assessed on its own terms.

\section{Conclusion} \label{sec-conclusions}

We present a novel black-box auditing framework for evaluating privacy leakage in QML models. 
Unlike prior white-box approaches providing only theoretical upper bounds, 
our method leverages \emph{Lifted QDP} to estimate empirical privacy leakage using black-box access only.
At the core of our framework are quantum canaries---offset-encoded quantum states inserted into training data to probe model memorization. 
By analyzing model responses to these canaries at inference, 
we estimate lower bounds on the privacy budget $\epsilon$, 
providing interpretable, data-driven privacy assessment.
This bridges theoretical privacy guarantees with practical auditing in deployment settings where model internals are inaccessible.
We validate the framework through extensive simulations and real quantum hardware evaluations, 
demonstrating effectiveness for real-world QML privacy auditing.

%\emph{Limitations}.
Our framework has three key limitations.
First, quantum canaries use Gaussian-sampled fixed offsets independent of model dynamics, 
potentially failing to detect localized memorization or adversarial obfuscation.
Second, experiments isolate single noise sources, 
while real quantum hardware exhibits simultaneous, interacting noise channels that may alter privacy estimates.
Finally, we address only differential privacy leakage, 
omitting broader inference threats (membership, attribute, property inference).

\emph{Future Work}.
Future directions include: 
i) adaptive quantum canary generation aligned with model representations, 
ii) joint modeling of multiple noise sources under Lifted QDP, 
and iii) extension to comprehensive inference threat evaluation 
for holistic adversarial robustness assessment.

\bibliography{yourbib}

@book{nielsen2001quantum,
	title={Quantum computation and quantum information},
	author={Nielsen, Michael A and Chuang, Isaac L},
	volume={2},
	year={2001},
	publisher={Cambridge university press Cambridge}
}

@article{hirche2023quantum,
	title={Quantum differential privacy: An information theory perspective},
	author={Hirche, Christoph and Rouz{\'e}, Cambyse and Fran{\c{c}}a, Daniel Stilck},
	journal={IEEE Transactions on Information Theory},
	year={2023},
	publisher={IEEE}
}

@inproceedings{dwork2006differential,
	title={Differential privacy},
	author={Dwork, Cynthia},
	booktitle={International Colloquium on Automata, Languages, and Programming},
	pages={1--12},
	year={2006},
	organization={Springer}
}

@inproceedings{zhou2017differential,
	title={Differential privacy in quantum computation},
	author={Zhou, Li and Ying, Mingsheng},
	booktitle={2017 IEEE 30th Computer Security Foundations Symposium (CSF)},
	pages={249--262},
	year={2017},
	organization={IEEE}
}

@article{du2021quantum,
	title={Quantum noise protects quantum classifiers against adversaries},
	author={Du, Yuxuan and Hsieh, Min-Hsiu and Liu, Tongliang and Tao, Dacheng and Liu, Nana},
	journal={Physical Review Research},
	volume={3},
	number={2},
	pages={023153},
	year={2021},
	publisher={APS}
}

@article{bai2024quantum,
	title={Quantum differential privacy under noise channels},
	author={Bai, Ya-Ru and Tao, Yuan-Hong and Wu, Shu-Hui and Zhang, Hui and Fei, Shao-Ming},
	journal={Physica Scripta},
	volume={99},
	number={3},
	pages={035119},
	year={2024}
}

@article{li2023differential,
	title={Differential Privacy Preserving Quantum Computing via Projection Operator Measurements},
	author={Li, Yuqing and Zhao, Yusheng and Zhang, Xinyue and Zhong, Hui and Pan, Miao and Zhang, Chi},
	journal={arXiv preprint arXiv:2312.08210},
	year={2023}
}

@article{oszmaniec2017simulating,
	title={Simulating positive-operator-valued measures wif projective measurements},
	author={Oszmaniec, Micha{\l} and Guerini, Leonardo and Wittek, Peter and Ac{\'\i}n, Antonio},
	journal={Physical review letters},
	volume={119},
	number={19},
	pages={190501},
	year={2017},
	publisher={APS}
}

@article{srikanth2008squeezed,
	title={Squeezed generalized amplitude damping channel},
	author={Srikanth, R and Banerjee, Subhashish},
	journal={Physical Review A},
	volume={77},
	number={1},
	pages={012318},
	year={2008},
	publisher={APS}
}

@inproceedings{guan2023detecting,
	title={Detecting violations of differential privacy for quantum algorithms},
	author={Guan, Ji and Fang, Wang and Huang, Mingyu and Ying, Mingsheng},
	booktitle={Proceedings of the 2023 ACM SIGSAC Conference on Computer and Communications Security},
	pages={2277--2291},
	year={2023}
}

@article{pillutla2024unleashing,
	title={Unleashing the power of randomization in auditing differentially private ml},
	author={Pillutla, Krishna and Andrew, Galen and Kairouz, Peter and McMahan, H Brendan and Oprea, Alina and Oh, Sewoong},
	journal={Advances in Neural Information Processing Systems},
	volume={36},
	year={2024}
}

@article{khurana2024quantum,
  title={Quantum machine learning: unraveling a new paradigm in computational intelligence},
  author={Khurana, Sachin and Nene, M and Nene, MJ},
  journal={Quantum},
  volume={74},
  number={1},
  year={2024}
}

@ARTICLE{10993414,
  author={Song, Baobao and Pokhrel, Shiva Raj and Vasilakos, Athanasios V. and Zhu, Tianqing and Li, Gang},
  journal={IEEE Journal on Selected Areas in Communications}, 
  title={Towards A Hybrid Quantum Differential Privacy}, 
  year={2025},
  volume={},
  number={},
  pages={1-1},
  keywords={Noise;Quantum system;Quantum state;Privacy;Qubit;Differential privacy;Noise measurement;Crosstalk;Quantum computing;Protection;Quantum computing;Quantum differential privacy;Quantum internal noise},
  doi={10.1109/JSAC.2025.3568046}}

@article{zeguendry2023quantum,
  title={Quantum machine learning: A review and case studies},
  author={Zeguendry, Amine and Jarir, Zahi and Quafafou, Mohamed},
  journal={Entropy},
  volume={25},
  number={2},
  pages={287},
  year={2023},
  publisher={MDPI}
}

@article{adhikary2020supervised,
  title={Supervised learning with a quantum classifier using multi-level systems},
  author={Adhikary, Soumik and Dangwal, Siddharth and Bhowmik, Debanjan},
  journal={Quantum Information Processing},
  volume={19},
  pages={1--12},
  year={2020},
  publisher={Springer}
}

@article{namatevs2025privacy,
  title={Privacy Auditing in Differential Private Machine Learning: The Current Trends.},
  author={Namatevs, Ivars and Sudars, Kaspars and Nikulins, Arturs and Ozols, Kaspars},
  journal={Applied Sciences (2076-3417)},
  volume={15},
  number={2},
  year={2025}
}

@inproceedings{carlini2019secret,
  title={The secret sharer: Evaluating and testing unintended memorization in neural networks},
  author={Carlini, Nicholas and Liu, Chang and Erlingsson, {\'U}lfar and Kos, Jernej and Song, Dawn},
  booktitle={28th USENIX security symposium (USENIX security 19)},
  pages={267--284},
  year={2019}
}

@article{steinke2023privacy,
  title={Privacy auditing with one (1) training run},
  author={Steinke, Thomas and Nasr, Milad and Jagielski, Matthew},
  journal={Advances in Neural Information Processing Systems},
  volume={36},
  pages={49268--49280},
  year={2023}
}

@article{jagielski2020auditing,
  title={Auditing differentially private machine learning: How private is private sgd?},
  author={Jagielski, Matthew and Ullman, Jonathan and Oprea, Alina},
  journal={Advances in Neural Information Processing Systems},
  volume={33},
  pages={22205--22216},
  year={2020}
}

@article{panda2025privacy,
  title={Privacy auditing of large language models},
  author={Panda, Ashwinee and Tang, Xinyu and Nasr, Milad and Choquette-Choo, Christopher A and Mittal, Prateek},
  journal={arXiv preprint arXiv:2503.06808},
  year={2025}
}

@article{meeus2025canary,
  title={The Canary's Echo: Auditing Privacy Risks of LLM-Generated Synthetic Text},
  author={Meeus, Matthieu and Wutschitz, Lukas and Zanella-B{\'e}guelin, Santiago and Tople, Shruti and Shokri, Reza},
  journal={arXiv preprint arXiv:2502.14921},
  year={2025}
}

@article{tomal2024quantum,
  title={Quantum Convolutional Neural Network: A Hybrid Quantum-Classical Approach for Iris Dataset Classification},
  author={Tomal, SM and Shafin, Abdullah Al and Afaf, Afrida and Bhattacharjee, Debojit},
  journal={arXiv preprint arXiv:2410.16344},
  year={2024}
}

@article{singh2025modeling,
  title={Modeling Quantum Machine Learning for Genomic Data Analysis},
  author={Singh, Navneet and Pokhrel, Shiva Raj},
  journal={arXiv preprint arXiv:2501.08193},
  year={2025}
}

@article{lopez2024quantum,
	title={Quantum Machine Learning: Applications, Algorithms, and Hardware Challenges},
	author={Lopez, Juan},
	journal={International Journal of AI, BigData, Computational and Management Studies},
	volume={5},
	number={4},
	pages={1--13},
	year={2024}
}

@article{tom2023quantum,
	title={Quantum computers and algorithms: a threat to classical cryptographic systems},
	author={Tom, Joshua J and Anebo, Nlerum P and Onyekwelu, Bukola A and Wilfred, A and Eyo, RE},
	journal={Int. J. Eng. Adv. Technol},
	volume={12},
	number={5},
	pages={25--38},
	year={2023}
}

@article{senokosov2024quantum,
	title={Quantum machine learning for image classification},
	author={Senokosov, Arsenii and Sedykh, Alexandr and Sagingalieva, Asel and Kyriacou, Basil and Melnikov, Alexey},
	journal={Machine Learning: Science and Technology},
	volume={5},
	number={1},
	pages={015040},
	year={2024},
	publisher={IOP Publishing}
}

@article{mcclean2018barren,
	title={Barren plateaus in quantum neural network training landscapes},
	author={McClean, Jarrod R and Boixo, Sergio and Smelyanskiy, Vadim N and Babbush, Ryan and Neven, Hartmut},
	journal={Nature Communications},
	volume={9},
	number={1},
	pages={4812},
	year={2018},
	publisher={Nature Publishing Group},
	doi={10.1038/s41467-018-07090-4}
}
\bibliographystyle{IEEEtranN}

\end{document}